\renewcommand*{\backrefalt}[4]{%
    \ifcase #1 \footnotesize{(Not cited.)}%
    \or        \footnotesize{(Cited on page~#2)}%
    \else      \footnotesize{(Cited on pages~#2)}%
    \fi}
\newtheorem*{rep@theorem}{\rep@title}
\newcommand{\newreptheorem}[2]{%
\newenvironment{rep#1}[1]{%
 \def\rep@title{#2 \ref{##1}}%
 \begin{rep@theorem}}%
 {\end{rep@theorem}}}
\newtheorem{definition}{Definition}[section]
\newtheorem{theorem}{Theorem}
\newtheorem{remark}{Remark}
\newtheorem{proposition}{Proposition}
\newtheorem{assumption}{Assumption}
\newcommand{\aref}[1]{\hyperref[#1]{A\ref*{#1}}}
\newtheorem{lemma}{Lemma}
\newenvironment{proofsketch}{%
  \proof}{\endproof}
\newcommand{\RNum}[1]{\uppercase\expandafter{\romannumeral #1\relax}}
\newcommand{\vertiii}[1]{{\left\vert\kern-0.25ex\left\vert\kern-0.25ex\left\vert #1 
    \right\vert\kern-0.25ex\right\vert\kern-0.25ex\right\vert}}
\newcommand{\vertiiii}[1]{{\vert\kern-0.25ex\vert\kern-0.25ex\vert #1 
    \vert\kern-0.25ex\vert\kern-0.25ex\vert}}
\DeclareMathOperator*{\argmin}{\arg\!\min}
\newcommand{\cut}[1]{}
\newcommand{\removelatexerror}{\let\@latex@error\@gobble}
\def\1{\bm{1}}
\def\eps{{\epsilon}}
\DeclareMathAlphabet{\mathsfit}{\encodingdefault}{\sfdefault}{m}{sl}
\SetMathAlphabet{\mathsfit}{bold}{\encodingdefault}{\sfdefault}{bx}{n}
\begin{document}

\twocolumn[

\aistatstitle{Performative Prediction with Neural Networks}

\aistatsauthor{ Mehrnaz Mofakhami \And Ioannis Mitliagkas \And  Gauthier Gidel }

\aistatsaddress{ Mila, Université de Montréal \And  Mila, Université de Montréal \\ Canada CIFAR AI Chair \And Mila, Université de Montréal  \\ Canada CIFAR AI Chair} ]

\begin{abstract}
Performative prediction is a framework for learning models that influence the data they intend to predict. We focus on finding classifiers that are \textit{performatively stable}, i.e.\ optimal for the data distribution they induce. Standard convergence results for finding a performatively stable classifier with the method of repeated risk minimization assume that the data distribution is Lipschitz continuous to the {\em model's parameters}.
Under this assumption, the loss {\em must} be strongly convex and smooth in these parameters; otherwise, the method will diverge for some problems. 
In this work, we instead assume that the data distribution is Lipschitz continuous with respect to the \textit{model's predictions}, a more natural assumption for performative systems. As a result, we are able to relax the assumptions on the loss function significantly. In particular, we do not need to assume convexity with respect to the model's parameters. As an illustration, we introduce a resampling procedure that models realistic distribution shifts and show that it satisfies our assumptions.
We support our theory by showing that one can learn performatively stable classifiers with neural networks making predictions about real data that shift according to our proposed procedure.
\end{abstract}

\looseness=-1 
\section{INTRODUCTION}
One of the main challenges in many of the decision-making tasks is that the data distribution changes over time. This concept is known as distribution shift or concept drift \citep{gamma2014survey, Tsymbal2004problem, Quionero2009dataset}. With a changing data distribution, the performance of a supervised learning classifier may degrade since it implicitly assumes a static relationship between input and output variables \citep{fang2020rethinking}. \textit{Performative prediction} is a framework introduced by \citet{performative} to deal with this problem when the distribution changes as a consequence of model deployment, usually through actions taken based on the model's predictions. For example, election predictions affect campaign activities and, in turn, influence the final election results. This performative behavior arises naturally in many problems of economics, social sciences, and machine learning, such as in loan granting, predictive policing, and recommender systems \citep{performative, Krauth2022breaking, ensign2018runaway}.

So far, most works in this area assume strong convexity of the risk function $\theta \mapsto \ell(z;\theta)$, which takes as input the model's parameters $\theta$, and a data point $z$ \citep{performative, mendler2020stochastic, brown2022stateful}. For example, \citet{performative} show that by assuming strong convexity and smoothness of the loss function along with some regularity assumptions on the data generation process, repeated retraining converges to a performative stable classifier, which is a model that is optimal for the distribution it induces. However, this strong convexity assumption does not hold for most modern ML models, e.g.\ neural networks. 
From a different perspective, given a data point $(x,y)$, the risk function can be expressed as a mapping from the prediction $x \mapsto f_\theta(x)$ to a loss between the prediction $\hat y:= f_\theta(x)$ and the target $y$, in which case convexity almost always holds. Consider, for example, the Squared Error loss function in a binary classification problem where $\ell(f_{\theta}(x),y) = \frac{1}{2}(f_{\theta}(x) - y)^2$ for a data point $z=(x,y)$; it is convex with respect to the model's predictions $f_{\theta}(x)$, but not necessarily with respect to the model's parameters $\theta$.

With this in mind, we propose a different perspective and formulation that shifts attention from the space of parameters to the space of predictions. More precisely, we require distributions to be functions of the model's prediction function instead of its parameters. The rationale behind this is that in many scenarios with performative effects of a classifier in the loop, the model's predictions are the quantities of interest rather than its parameters. Actually, the framework assumes the data distribution changes as a result of model deployment, and at the time of deployment, it is the final predictions that matter rather than the parameters that led to those predictions. 

Within our formulation, we show that by having a stronger assumption on the distribution map than the original framework, 
we can relax the convexity condition on the loss function and prove the existence and uniqueness of a performative stable classifier under repeated risk minimization with significantly weaker assumptions with regard to the regularity of the loss. Informally, these assumptions include strong convexity of the loss with respect to the predictions, the boundedness of the derivative of the loss function, and the Lipschitzness of the distribution map with respect to the $\chi^2$ divergence. This more general set of assumptions on the loss function lets us theoretically analyze the performative effects of neural networks with non-convex loss functions; we believe this is a significant step toward bridging the gap between the theoretical performative prediction framework and realistic settings. Our setting and the main theoretical results will be explained in Section~\ref{sec2}.

\subsection{Background}
Before stating our main theoretical contribution, we first recall the key concepts of the performative prediction framework. The main difference between classic supervised learning and the performative prediction framework is that the latter considers the data distribution to be model-dependent, i.e. it assumes that the distribution map directly depends on the model's parameters $\theta$ and is denoted by $\theta \mapsto \mathcal{D}(\theta)$. The distribution map is said to satisfy a notion of Lipschitz continuity called \textit{$\epsilon$-sensitivity} if for any $\theta$ and $\theta'$, $\mathcal{W}_1\left(\mathcal{D}(\theta), \mathcal{D}(\theta')\right)\le \epsilon \|\theta - \theta'\|_2$, where $\mathcal{W}_1$ denotes the Wasserstein-1 distance.
The performance of a model with parameters $\theta$ is measured by its $\textit{performative risk}$ under the loss function $\ell$ which is stated as a function of a data point $z$ and $\theta$:
\begin{equation*}
    PR(\theta) \overset{\text{def}}{=} \mathop{\mathbb{E}}_{z\sim \mathcal{D}(\theta)} \ell(z;\theta).
\end{equation*}
A classifier with parameters $\theta_{\text{PS}}$ is \textit{performatively stable} if it minimizes the risk on the distribution it induces. In other words, it is the fixed point of repeated retraining.  
\begin{equation*}
    \theta_{\text{PS}} = \argmin_{\theta \in \Theta} \mathop{\mathbb{E}}_{z\sim \mathcal{D}(\theta_{\text{PS}})} \ell(z;\theta).
\end{equation*}
\citet{performative} demonstrate that with an $\epsilon$-sensitive distribution map, $\gamma$-strong convexity of $\ell$ in $\theta$ and \mbox{$\beta$-smoothness} of $\ell$ in $\theta$ and $z$ are sufficient and \emph{necessary} conditions for repeated risk minimization to converge to a performatively stable classifier if $\frac{\epsilon\beta}{\gamma}<1$. We show, however, that by slightly changing the assumptions on the distribution map, we can break their negative result regarding the necessary strong convexity and show that one can converge even when the loss is non-convex in $\theta$.

\subsection{Our contributions} \label{contribution}
   Our paper provides sufficient conditions for the convergence of repeated risk minimization to a classifier with unique predictions under performative effects in the absence of convexity to the model's parameters. The key idea in our framework is that the distribution map is no longer a function of the parameters $\theta$, but a function of the model's predictions, denoted by $\mathcal{D}(f_{\theta})$ where $f_{\theta}$ is in $\mathcal{F}$, the set of parameterized functions by \(\theta\in \Theta\). We also express the loss $\ell$ as a function of the prediction $f_{\theta}(x)$ and the target $y$, where both can be multi-dimensional. Following is the informal statement of our main theorem.
   
\begin{theorem}(Informal)\label{informal} If the loss $\ell(f_{\theta}(x),y)$ is strongly convex in $f_{\theta}(x)$ with a bounded gradient norm, and the distribution map $f_{\theta} \mapsto \mathcal{D}(f_{\theta})$ is sufficiently Lipschitz with respect to the $\chi^2$ divergence and satisfies a bounded norm ratio condition, then repeated risk minimization converges linearly to a stable classifier with unique predictions if the function space is convex and to a small neighborhood around a unique stable classifier if it is not.
\end{theorem}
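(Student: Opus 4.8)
The plan is to reformulate repeated risk minimization as a fixed-point iteration in prediction space and to show the associated retraining operator is a contraction. Writing $L_{\mathcal{D}}(g) \defeq \mathbb{E}_{(x,y)\sim\mathcal{D}}\,\ell(g(x),y)$, one retraining step is the operator $G(f) \defeq \argmin_{g\in\mathcal{F}} L_{\mathcal{D}(f)}(g)$. Strong convexity of $\ell$ in its prediction argument lifts directly to $\mu$-strong convexity of the functional $L_{\mathcal{D}}$ in $g$ with respect to the $L^2(\mathcal{D}_x)$ norm, so its minimizer over a convex $\mathcal{F}$ is unique; this is already the source of the ``unique predictions'' conclusion at any fixed point. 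Convexity of $\mathcal{F}$ also yields the variational inequality $\langle \nabla L_{\mathcal{D}(f)}(G(f)),\, g - G(f)\rangle \ge 0$ for every $g \in \mathcal{F}$, which is the only place convexity of the function space is used.

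For the contraction, take classifiers $f_1, f_2$ with iterates $g_i = G(f_i)$. Strong convexity at the minimizer $g_1$ gives $\tfrac{\mu}{2}\,\|g_1-g_2\|^2_{L^2(\mathcal{D}(f_1)_x)} \le L_{\mathcal{D}(f_1)}(g_2) - L_{\mathcal{D}(f_1)}(g_1)$; adding and subtracting the corresponding $L_{\mathcal{D}(f_2)}$ values and discarding the nonpositive term $L_{\mathcal{D}(f_2)}(g_2) - L_{\mathcal{D}(f_2)}(g_1) \le 0$ (minimality of $g_2$) leaves
$$\tfrac{\mu}{2}\,\|g_1-g_2\|^2_{L^2(\mathcal{D}(f_1)_x)} \;\le\; \int \big[\ell(g_2(x),y) - \ell(g_1(x),y)\big]\,\big(d\mathcal{D}(f_1) - d\mathcal{D}(f_2)\big).$$
The bounded-gradient-norm hypothesis makes $\ell$ Lipschitz in its prediction argument, so the bracket is at most $B\,\|g_2(x)-g_1(x)\|$ pointwise; Cauchy--Schwarz against the likelihood ratio $d\mathcal{D}(f_1)/d\mathcal{D}(f_2) - 1$ then bounds the integral by $B\,\|g_1-g_2\|_{L^2(\mathcal{D}(f_2)_x)}\,\sqrt{\chi^2(\mathcal{D}(f_1)\|\mathcal{D}(f_2))}$. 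The $\chi^2$-Lipschitzness of the distribution map replaces the last factor by $\epsilon\,\|f_1-f_2\|$, and cancelling one power of $\|g_1-g_2\|$ yields a contraction estimate $\|G(f_1)-G(f_2)\| \le c\,\|f_1-f_2\|$ whose constant $c$ is governed by $B$, $\epsilon$, $\mu$ and the norm-ratio constant below; Banach's theorem then delivers existence, uniqueness of stable predictions, and the linear rate as soon as $c<1$.

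The hard part will be that the quadratic term is measured in $L^2(\mathcal{D}(f_1)_x)$ while the linear term produced by Cauchy--Schwarz is measured in $L^2(\mathcal{D}(f_2)_x)$, so the two sides do not immediately share a norm and the cancellation above is not literally valid. This is exactly what the bounded-norm-ratio condition repairs: it certifies that the $L^2$ norms under the two induced marginals are equivalent up to a fixed constant, letting me convert $\|g_1-g_2\|_{L^2(\mathcal{D}(f_2)_x)}$ into $\|g_1-g_2\|_{L^2(\mathcal{D}(f_1)_x)}$ and close the estimate in a single norm. I expect the careful tracking of which marginal each norm refers to --- and paying the ratio constant at precisely the right step so that $c$ stays below one --- to be more delicate than any individual inequality.

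Finally, when $\mathcal{F}$ is not convex the variational inequality underlying the strong-convexity step fails, so $\langle \nabla L_{\mathcal{D}(f_1)}(g_1),\, g_2-g_1\rangle$ need no longer be nonnegative. I would carry this as an additive stationarity defect $\delta$ measuring the failure of first-order optimality over $\mathcal{F}$ and propagate it through the same chain, obtaining a perturbed recursion $\|G(f_1)-G(f_2)\| \le c\,\|f_1-f_2\| + \delta$. Iterating this no longer reaches an exact fixed point but contracts into a ball of radius $O\!\big(\delta/(1-c)\big)$ around the unique stable predictions, which is precisely the neighborhood claim.
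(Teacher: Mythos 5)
Your part (a) is essentially the paper's own argument: strong convexity plus the variational inequality coming from convexity of $\mathcal{F}$, a Cauchy--Schwarz step against the likelihood ratio that converts the distribution perturbation into $\sqrt{\chi^2}$, the $\epsilon$-sensitivity assumption, and the bounded norm ratio to close the estimate in one norm before invoking Banach. One small remark: the norm mismatch you flag as ``the hard part'' is largely an artifact of which marginal you put in the denominator of the $\chi^2$. The paper weights the Cauchy--Schwarz by $p_{f_\theta}$, the same measure appearing in the quadratic term, so both factors come out in the same induced norm $\|\cdot\|_{f_\theta}$ and cancel directly; the bounded norm ratio is then needed only once, at the very end, to convert the surviving $\|f_{G(\theta)}-f_{G(\theta')}\|_{f_\theta}$ into the fixed base norm in which the sensitivity bound $\epsilon\|f_\theta-f_{\theta'}\|^2$ is stated. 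Your route also works but pays the ratio constant an extra time.

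The genuine gap is in part (b). You carry the failure of first-order optimality as an abstract additive defect $\delta$ and conclude convergence to a ball of radius $O(\delta/(1-c))$, but you never connect $\delta$ to anything assumed to be small; for a generic nonconvex $\mathcal{F}$ the stationarity defect at the RRM iterate need not be small, so the ``small neighborhood'' claim is not established. The paper's actual hypothesis is an approximation-capacity assumption: $\mathcal{F}$ can approximate the unconstrained measurable minimizer $f^*_{\theta}(x)=\mathbb{E}_{p_{f_\theta}}[Y\mid X=x]$ to within $\kappa$, and the loss is specialized to squared error. This matters because squared error identifies the unconstrained minimizer as the conditional expectation and makes the RRM iterate $f_{G(\theta')}$ exactly the $\|\cdot\|_{f_{\theta'}}$-projection of $f^*_{\theta'}$ onto $\mathcal{F}$ (the paper's Lemma~\ref{lemma:argmin}); combined with $1$-smoothness of the squared loss, this bounds the offending inner product by $(\kappa\sqrt{C}/c)\,\|f_{G(\theta)}-f_{G(\theta')}\|_{f_\theta}$ (Proposition~\ref{prop:delta_approx}), i.e.\ the defect is $O(\kappa)$ times the quantity being contracted rather than a free-standing constant. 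A further triangle-inequality argument against the stable classifier $f^*$ over the convex class $\mathcal{M}$ of all measurable functions (whose existence and uniqueness follow from part (a)) then yields the recursion with additive term $O(\kappa)$. Without identifying this approximation assumption and the projection characterization, your part (b) does not go through.
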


We will state this theorem formally in Section~\ref{sec2}. The critical assumption we make on the distribution map is Lipschitz continuity, which captures the idea that a small change in the model's predictions cannot lead to a large change in the induced data distribution, as measured by the $\chi^2$ divergence.
This is more restrictive than the Lipschitz continuity assumption of \citet{performative} with $\mathcal{W}_1$, since for the $\chi^2$ divergence to be finite, distributions should have the same support. However, we show that this still holds in realistic settings, and we discuss that this stronger assumption on the distribution map is a price we have to pay to relax the assumptions on the loss function significantly and have convergence guarantees for neural networks with non-convex loss functions.

In Section~\ref{experiments}, we demonstrate our main results empirically with a \textit{strategic classification} task, which has been used as a benchmark for performative prediction \citep{performative, miller2021outside, brown2022stateful}. Strategic classification involves an institution that deploys a classifier and agents who strategically manipulate their features to alter the classifier's
predictions to get better outcomes. 
We propose a resampling procedure called \textit{Resample-if-Rejected (RIR)} in Section~\ref{RIR} to model the population's strategic responses and show that it results in a distribution map that satisfies the conditions of Theorem~\ref{informal}.

Within this process, a sample $x$ is drawn from the base distribution and is rejected with some probability dependent on $f_{\theta}(x)$ and accepted otherwise; in case of rejection, another sample from the base distribution will be drawn.

A real-life example that this procedure may be able to model is regarding posting content on social media. Actually, social media use many ML models to automatically regulate the content posted by the users. Consequently, some users' posts may be rejected by the automatic regulation because their content was considered to violate the platform's community guidelines. In some situations, the authors might consider this rejection unfair and may tweak some parts of the post in order to get accepted. In our experiments, we model this resubmission by resampling some strategic features (i.e., features that do not drastically affect the content and that are easily modifiable) of the post. 

\subsection{Related work}
Prior work on performative prediction focused on learning from a data distribution $\mathcal{D}(\theta)$ that could change with the model's parameter $\theta$ \citep{performative,mendler2020stochastic,brown2022stateful,drusvyatskiy2022stochastic,miller2021outside,izzo2021learn,maheshwari2022zeroth,ray2022decision,li2022state,dong2021approximate,jagadeesan2022regret}. In this work, we propose to strengthen the standard $\epsilon$-sensitivity assumption on the distribution map initially proposed by~\citet{performative}. To a certain extent, we propose a novel $\epsilon$-sensitivity assumption for the performative prediction framework that allows us to relax the convexity assumption on the loss function. Such relaxation is essential if we want to consider the practical setting of classifiers \mbox{parametrized by neural networks.} 

At a technical level, our analysis is inspired by~\citet[Theorem ~3.5]{performative}. However, because our quantity of interest is a distance in the function space (see Theorem~\ref{thm}) our proof significantly differs from~\citet{performative}. We require a different notion of $\epsilon$-sensitivity and an additional assumption (Assumption~\ref{Assumption2}) in order to control the variation of the functional norm defined in Assumption~\ref{A1}.  

Various prior works have focused on finding performatively stable classifiers \citep{performative, brown2022stateful, mendler2020stochastic, li2022state}, but to the best of our knowledge, none of them analyze the convergence of repeated retraining with loss functions that might be non-convex to the model's parameters.

Exploiting convexity in the model's predictions has previously been explored by ~\citet{bengio2005convex}, who noticed that most of the loss functions to train neural networks are convex with respect to the neural network itself. There have been many works trying to leverage this property to show convergence results applied to neural networks in the context of machine learning~\citep{bach2017breaking,chizat2018global,mladenovic2021generalized}. However, none of these results are in the context of performative prediction. \citet{jagadeesan2022regret} proposes an algorithm to find classifiers with near-optimal performative risk without assuming convexity. First, their work focuses on a different notion of optimality (namely, performatively optimal points). Second, they focus on regret minimization, while our work is concerned with finding a performatively stable classifier with gradient-based algorithms and having guarantees to make sure we converge to such a stable classifier within a reasonable \mbox{number of steps.}\footnote{For a $\delta$-approximate optimum, \citet{jagadeesan2022regret} propose an algorithm that requires $O(1/\delta^{d})$ repeated minimizations for the last iterate where $d$ is some notion of dimension. In comparison, in Theorem~\ref{thm} we require $O(\log(1/\delta))$ minimizations.}

Similarly to our work, \citet{mendler-dünner2022predicting} assume that the performativity of a model occurs through its predictions and consider the distribution a function of the predictive model. However, this paper has a different focus entirely; they try to find a set of conditions under which the causal effect of predictions becomes identifiable. Additionally, they focus on a subset of performative prediction problems where predictions only influence the target variable and not the features $X$. Hence, their analysis does not capture strategic classification.

\section{FRAMEWORK AND MAIN RESULTS}\label{sec2}

To propose our main theorem, we first need to redefine some of the existing concepts. As mentioned earlier, we assume $\mathcal{D}(\theta)$ to be a mapping from the model's prediction function $f_{\theta}$ to a distribution $\mathcal{D}(f_{\theta})$ over instances $z$, where $f_{\theta}$ is in $\mathcal{F}$, the set of parameterized neural networks by $\Theta$. Each instance $z$ is a pair of features and label $(x,y)$. With this new formulation, the objective risk function will be defined as follows:
\begin{definition} [\textit{Performative Risk}] \label{def2}
 Performative risk (PR) is defined as follows:
\[\text{PR}(f_{\theta}) \overset{\text{def}}{=} \mathop{\mathbb{E}}_{z\sim \mathcal{D}(f_{\theta})} \ell(f_{\theta}(x),y).\]
\end{definition}

In this work, we focus on finding a performatively stable classifier, which minimizes the risk on the distribution its prediction function entails:
\begin{definition} A classifier with parameters $\theta_{\text{PS}}$ is performatively stable if:\\
\[\theta_{\text{PS}} = \argmin_{\theta \in \Theta} \mathop{\mathbb{E}}_{z\sim \mathcal{D}(f_{\theta_{\text{PS}}})} \ell(f_{\theta}(x),y).\]
\end{definition}
Repeated retraining is the algorithm we use to find a stable classifier, which is defined formally as follows:
\begin{definition}[\textit{RRM}] \label{def3}
Repeated Risk Minimization (RRM) refers to the procedure where, starting from an initial \(\theta_0\), we perform the following sequence of updates for every \(t\ge 0\):
\[\theta_{t+1}=G(\theta_t)\overset{\text{def}}{=} \argmin_{\theta\in \Theta} \mathop{\mathbb{E}}_{z\sim \mathcal{D}(f_{\theta_t})} \ell(f_{\theta}(x),y).\]
\end{definition}

\subsection{Assumptions}
In order to provide convergence guarantees for repeated retraining, we require regularity assumptions on the distribution map and the loss function. A natural assumption we make on $\mathcal{D}(.)$ inspired by prior work is Lipschitz continuity, formally referred to as \textit{$\epsilon$-sensitivity}. Intuitively, this assumption states the idea that
if two models with similar prediction functions are deployed, then the induced distributions should also be similar. We refer to the \emph{base distribution} $\mathcal{D}$ as the distribution over (features, label) pairs before any classifier deployment.
\begin{assumption}\label{A1} (A1) [\textit{$\epsilon$-sensitivity w.r.t Pearson $\chi^2$ divergence}] Suppose the base distribution $\mathcal{D}$ has the probability density function (pdf) $p$ over instances $z=(x,y)$.
The distribution map $\mathcal{D}(.)$ which maps $f_{\theta}$ 
to $\mathcal{D}(f_{\theta})$ with the pdf $p_{f_{\theta}}$ is \(\epsilon\)-sensitive w.r.t  Pearson $\chi^2$ divergence, i.e., for all  \(f_{\theta}\) and \(f_{\theta'}\) in \(\mathcal{F}\) the following holds:
\begin{equation*}
\chi^2(\mathcal{D}(f_{\theta'}), \mathcal{D}(f_{\theta})) \le \epsilon \|f_{\theta}-f_{\theta'}\|^2,
\end{equation*}
where $\|f_{\theta} - f_{\theta'}\|^2 := \int \|f_{\theta}(x) - f_{\theta'}(x)\|^2 p(x) dx$
and $\chi^2(\mathcal{D}(f_{\theta'}), \mathcal{D}(f_{\theta})) := \int \frac{\left(p_{f_{\theta'}}(z) - p_{f_{\theta}}(z)\right)^2}{p_{f_{\theta}}(z)} dz$
\end{assumption}

\begin{assumption}\label{Assumption2} (A2,A2$^\prime$) [\textit{Bounded norm ratio}]
     The distribution map $\mathcal{D}(.)$ satisfies bounded norm ratio with the parameters \mbox{$C\ge1$}, and $c\le C$ if for all $f_{\theta}, f_{\theta'}, f_{\theta^*} \in\mathcal{F}$:
     \begin{equation}\|f_{\theta}-f_{\theta'}\|^2 \le C \|f_{\theta}-f_{\theta'}\|_{f_{\theta^*}}^2, \tag{$A2$} \label{A2}
     \end{equation}
     and
     \begin{equation}
         c \|f_{\theta}-f_{\theta'}\|_{f_{\theta^*}}^2 \le\|f_{\theta}-f_{\theta'}\|^2, \tag{$A2'$}\label{A2prime}
     \end{equation}
     where 
     \[\|f_{\theta}-f_{\theta'}\|_{f_{\theta^*}}^2 = \int \|f_{\theta}(x) - f_{\theta'}(x)\|^2 p_{f_{\theta^*}}(x) dx\]
     is a notation for an $f_{\theta^*}$-dependent norm. In other words, the above inequalities will be equivalent to
         \[\mathbb{E}_{p(x)} [\|f_{\theta}(x)-f_{\theta'}(x)\|^2]
    \le C\  \mathbb{E}_{p_{f_{\theta^*}}(x)} [\|f_{\theta}(x)-f_{\theta'}(x)\|^2],\]
    and
     \[c\ \mathbb{E}_{p_{f_{\theta^*}}(x)} [\|f_{\theta}(x)-f_{\theta'}(x)\|^2] \le \mathbb{E}_{p(x)} [\|f_{\theta}(x)-f_{\theta'}(x)\|^2],\]
    for $(A2)$ and $(A2^\prime)$ respectively, where $p(x)$ and $p_{f_{\theta^*}}(x)$ are pdfs for the marginal distribution of $X$ according to $\mathcal{D}$ and $\mathcal{D}(f_{\theta^*})$.
\end{assumption}

The distribution map satisfies the bounded norm ratio condition if the bounded density ratio property holds, i.e. $c\ p_{f_{\theta}(x)}\le p(x)\le C\ p_{f_{\theta}}(x)$ for every $f_{\theta}\in \mathcal{F}$. We will show how the bounded density ratio holds in our example in Section~\ref{RIR}.

Our notion of Lipschitz Continuity uses the Pearson $\chi^2$ divergence---interchangeably referred to as $\chi^2$ divergence---to measure the distance between distributions, as opposed to \citet{performative} who use $\mathcal{W}_1$ distance. Using $\chi^2$ divergence is more restrictive since the distributions should have the same support for the $\chi^2$ divergence between them to be finite.

As stated in Remark~\ref{remark:W1}, $\epsilon$-sensitivity with respect to $\chi^2$ implies $K\sqrt{\epsilon}$-sensitivity with respect to $\mathcal{W}_1$ for a constant $K$ that depends on the diameter of the space $d_{max}$. If $\frac{d_{max}}{2}<\sqrt{\epsilon}$, our notion of $\epsilon$-sensitivity w.r.t $\chi^2$ is indeed stronger than the corresponding notion of $\epsilon$-sensitivity w.r.t $\mathcal{W}_1$. However, Proposition~\ref{proposition1} explains why we cannot replace $\chi^2$ with $\mathcal{W}_1$ within our settings.

\begin{remark} \label{remark:W1}
    For two distributions $\mathcal{D}(x)$ and $\mathcal{D}'(x)$, the $\mathcal{W}_1$ distance is upper bounded by a coefficient of the square root of $\chi^2$ divergence \citep[Figure 8.2]{peyre2019computational}:
    \begin{equation*}
        \mathcal{W}_1(\mathcal{D}(x), \mathcal{D}'(x)) \le \frac{d_{max}}{2}\sqrt{\chi^2(\mathcal{D}(x), \mathcal{D}'(x))},
    \end{equation*}
    where $\mathcal{X}$ is a metric space with ground distance $d$ and $d_{max} = \sup_{(x,x')} d(x,x')$ is the diameter of $\mathcal{X}$. \\
    If we define $\epsilon$-sensitivity of $\mathcal{D}(.)$ w.r.t $\mathcal{W}_1$ as
\begin{equation*} \label{eq:A1prime}
        \mathcal{W}_1(\mathcal{D}(f_{\theta}), \mathcal{D}(f_{\theta'})) \le \epsilon \|f_{\theta} - f_{\theta'}\|, \tag*{${(A1)^\prime}$}   
\end{equation*}
    then $\epsilon$-sensitivity w.r.t $\chi^2$ implies $\frac{d_{max}}{2}\sqrt{\epsilon}$-sensitivity with respect to $\mathcal{W}_1$:
    \begin{equation*}
        \mathcal{W}_1(\mathcal{D}(f_{\theta}), \mathcal{D}(f_{\theta'})) \le \frac{d_{max}}{2}\sqrt{\epsilon} \|f_{\theta} - f_{\theta'}\|.
    \end{equation*}
\end{remark}

Despite the downsides of our assumptions on the distribution map, these assumptions still hold in some realistic settings, an example of which is a resampling procedure proposed in Section~\ref{RIR}. The idea of this distribution shift is that individuals are more likely to change their features (by resampling them) if there is a high chance that they will receive an unfavorable classification outcome, which is quantified by the model's prediction. 

While imposing a more restrictive assumption on the distribution map, we significantly relax the assumptions on the loss function. In particular, we no longer need to assume that loss is convex to the model's parameters, which opens the door to consider deep neural networks as classifiers in our analysis. We still require some mild assumptions on the loss function $\ell$ that are as follows:

\begin{assumption}\label{A3} (A3) [\textit{Strong convexity w.r.t predictions}] The loss function $\ell(f_{\theta}(x),y)$ which takes as inputs the prediction $f_{\theta}(x)$ and the target $y$, is $\gamma$-strongly convex in $f_{\theta}(x)$. More precisely, the following inequality holds for every $f_{\theta},f_{\theta'} \in \mathcal{F}$:
\begin{align*}
\ell(f_{\theta}(x),y) & \ge \ell(f_{\theta'}(x),y) +  \\ & (f_{\theta}(x) - f_{\theta'} (x))^\top \nabla_{\hat{y}} \ell(f_{\theta'}(x),y) +
\\& \frac{\gamma}{2} \| f_{\theta}(x) - f_{\theta'}(x)\|^2,  
\end{align*}
where $\nabla_{\hat y} \ell(f_{\theta}(x) ,y)$ is the gradient of the function \mbox{$\hat y \in \mathbb{R}^d \mapsto \ell(\hat y,y)$} at $f_{\theta}(x)$.
\end{assumption}

\begin{assumption}\label{A4} (A4)  [\textit{Bounded gradient norm}] The loss function $\ell(f_{\theta}(x),y)$ has bounded gradient norm, i.e., the norm of its gradient with respect to \(f_{\theta}(x)\) is upper bounded with a finite value \(M = \sup_{x,y,\theta} \|\nabla_{\hat{y}}\ell(f_{\theta}(x),y)\|\).
\end{assumption}

We can easily see that these two assumptions on $\ell$ are satisfied by the Squared Error loss: \mbox{$\ell(f_{\theta}(x), y) = \frac{1}{2}\|f_{\theta}(x) - y\|^2$}. This function is $1$-strongly convex with a bounded gradient norm of $\sqrt{d}$ if $y$ is a one-hot vector in $\mathbb{R}^d$ and $f_{\theta}(x)\in [0,1]^d$ for any $\theta$. More broadly, when the predictions are bounded, e.g. in $[0,1]^d$, then the quantity $M$ in Assumption~\ref{A4} always exists for continuously differentiable loss functions, which makes it a very mild assumption.

In Section~\ref{Sec2.2} we show that if assumptions $A1-A4$ are satisfied for a distribution map and a loss function, then RRM on a convex $\mathcal{F}$ converges to a unique stable classifier if $\frac{\sqrt{C\epsilon}M}{\gamma}$ is less than $1$.
Proposition~\ref{proposition1} shows that replacing $\epsilon$-sensitivity w.r.t $\chi^2$ (\aref{A1}) with $\epsilon$-sensitivity w.r.t $\mathcal{W}_1$~\ref{eq:A1prime} while keeping other assumptions would break this convergence result, in the sense that RRM will oscillate between two models forever. This justifies why we cannot use $\mathcal{W}_1$ within our analysis.
\vspace{3pt}
\begin{proposition} \label{proposition1}
    Suppose that the loss $\ell(f_{\theta}(x),y)$ is \mbox{$\gamma$-strongly} convex in $f_{\theta}(x)$, and has a derivative bounded by $M$. If the distribution map satisfies the bounded norm ratio property with a parameter $C$, and it is $\epsilon$-sensitive w.r.t $\mathcal{W}_1$~\ref{eq:A1prime}, RRM may diverge for any value of $\epsilon$, particularly even if $\frac{\sqrt{C\epsilon}M}{\gamma}<1.$
\end{proposition}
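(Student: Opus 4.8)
The plan is to prove this negative result by constructing an explicit counterexample: a one-dimensional problem with squared-error loss on which all the \emph{retained} hypotheses hold — bounded norm ratio (Assumption~\ref{Assumption2}), strong convexity and bounded gradient with respect to the predictions (Assumptions~\ref{A3} and~\ref{A4}), and $\mathcal{W}_1$-sensitivity \ref{eq:A1prime} — yet RRM settles into a two-point cycle. The guiding intuition is that $\mathcal{W}_1$ only controls how far probability mass is transported, whereas the RRM map $G$ of Definition~\ref{def3} is driven by how the conditional mean of the target moves. For squared loss on a sufficiently rich $\mathcal{F}$ the minimizer is exactly the induced conditional mean, so I can engineer that mean to move with ``slope $-1$'' and produce a perfect oscillation while keeping the $\mathcal{W}_1$ distance, and hence $\epsilon$, as small as desired.

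Concretely, I would first fix the feature marginal to a single atom $x_0$, so that $\mathcal{F}$ reduces to an interval $\Theta\subset\mathbb{R}$ of constant predictions $f_\theta\equiv\theta$; this set is convex. Since the marginal of $X$ is identical under every deployment, $p_{f_{\theta^*}}(x)=p(x)$ for all $\theta^*$, hence $\|\cdot\|_{f_{\theta^*}}=\|\cdot\|$ and the bounded norm ratio \ref{A2}--\ref{A2prime} holds with $C=c=1$. Next I would place the target under $\mathcal{D}(f_\theta)$ at the point mass $y=b-\theta$ and take $\ell(\hat y,y)=\tfrac12(\hat y-y)^2$, which is $1$-strongly convex in the prediction. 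A direct computation then gives $G(\theta_t)=\argmin_{\theta\in\Theta}\tfrac12(\theta-(b-\theta_t))^2=b-\theta_t$, so for a suitable $\Theta$ (e.g. $\Theta=[0,\eta]$, $b=\eta$, $\theta_0=0$, keeping the whole cycle inside $\Theta$) the iterates alternate between $\theta_0$ and $b-\theta_0$ forever: RRM oscillates between two models and never converges.

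It then remains to verify the two quantitative assumptions with the product below $1$. For Assumption~\ref{A4}, restricting $\Theta$ to a small interval keeps $\theta$ and $y=b-\theta$ bounded, so $M=\sup|\theta-y|$ is finite and can be made arbitrarily small. For $\mathcal{W}_1$-sensitivity, since the induced laws are atoms shifted by $b-\theta$, coupling them through the common feature gives $\mathcal{W}_1(\mathcal{D}(f_\theta),\mathcal{D}(f_{\theta'}))=|\theta-\theta'|=\|f_\theta-f_{\theta'}\|$, i.e. $\epsilon=1$; rescaling the ground metric of the instance space by a factor $\kappa>0$ — which leaves the squared-loss $\argmin$, and hence the oscillation, unchanged — turns this into $\mathcal{W}_1=\kappa\|f_\theta-f_{\theta'}\|$, realizing any prescribed value $\epsilon=\kappa$. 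With $C=c=1$ and $\gamma=1$ the would-be sufficient quantity becomes $\tfrac{\sqrt{C\epsilon}M}{\gamma}=\sqrt{\kappa}\,M$, which is below $1$ for small $\kappa$ (or already at $\kappa=1$ for small $M$), even though RRM diverges, which is exactly the claimed phenomenon.

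The main obstacle, and the only genuinely subtle point, is reconciling the oscillation with the finiteness of $C$: the shift that defeats RRM is precisely one that pushes the target mass onto (nearly) disjoint supports, which is what makes $\chi^2$ blow up and would violate Assumption~\ref{A1}. I sidestep the apparent conflict by moving only the conditional law of $Y$ while freezing the marginal of $X$, so that the bounded norm ratio — which constrains only the $X$-marginal through $\|\cdot\|$ and $\|\cdot\|_{f_{\theta^*}}$ — stays at $C=c=1$, even though the $\chi^2$ divergence between the target atoms is infinite. This is the structural reason the same example cannot be built under Assumption~\ref{A1}, and it is what Remark~\ref{remark:W1} anticipates: $\mathcal{W}_1$ smallness does not translate into the $\chi^2$ smallness that the convergence argument actually requires.
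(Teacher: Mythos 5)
Your construction is correct, and it takes a genuinely different route from the paper's. The paper's counterexample keeps the conditional law of $Y$ fixed and instead moves the $X$-marginal (a point mass at $\epsilon(\tanh\theta+2)$), pairing it with a shifted-quadratic loss so that the optimal prediction value is a fixed constant; the oscillation then arises from the multiplicative inversion $(\tanh\theta_{t+1}+2)=\tfrac{15/4}{\tanh\theta_t+2}$, and the bounded norm ratio has to be verified nontrivially (any $C>3$ works there). You instead freeze the $X$-marginal at a single atom and move only the conditional mean of $Y$ to $b-\theta$, so that plain squared error gives the additive reflection $G(\theta_t)=b-\theta_t$; this makes \eqref{A2} and \eqref{A2prime} hold trivially with $C=c=1$ and uses the same loss as the paper's positive results, which arguably isolates the real culprit ($\mathcal{W}_1$ versus $\chi^2$ on mutually singular measures) more cleanly. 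Both examples exploit the identical loophole — two point masses at distinct locations have small $\mathcal{W}_1$ distance but infinite $\chi^2$ divergence — and your closing paragraph correctly identifies this as the structural reason the example cannot survive Assumption~\ref{A1}. Two small caveats: the ground-metric rescaling used to realize arbitrary $\epsilon<1$ is legitimate only because the framework leaves the metric on the instance space unspecified (the paper instead bakes $\epsilon$ into the construction); and, as in the paper's own proof, point masses do not strictly have pdfs as Assumption~\ref{A1} is phrased, but since the proposition's hypotheses only involve $\mathcal{W}_1$ and the norm-ratio condition, neither issue affects the validity of the argument — the headline claim is already established at $\epsilon=1$ by taking $M=\eta$ small.
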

\begin{proof}\label{proof:counter_example}
Consider a supervised learning problem where a model with parameters $\theta$ uses the prediction function $f_{\theta}(x) = \frac{\tanh(\theta)+2}{\epsilon}x$ where $x\in(0,3\epsilon]$. Take the base distribution on $X$ as a uniform distribution \mbox{over this interval.}\\
The loss function is defined as
\[
    \ell(f_{\theta}(x),y) = \frac{-15\gamma}{4} (f_{\theta}(x) - y ) + \frac{\gamma}{2} f_{\theta}(x)^2 + \frac{\gamma}{2} y^2 + \gamma(\frac{15}{4})^2.
\]
This $\ell$ is non-negative, $\gamma$-strongly convex w.r.t $f_{\theta}(x)$, and its derivative in $f_{\theta}(x)$ is bounded.

Let the distribution of $X$ according to $\mathcal{D}(f_{\theta})$ be a point mass at $f_\theta(\epsilon^2)=\epsilon(\tanh(\theta)+2)$ and the distribution of $Y$ be invariant w.r.t $f_{\theta}$. 

$\mathcal{D}(f_{\theta})$ is $\epsilon$-sensitive w.r.t the Wasserstein-1 distance:

Choose $f_{\theta}$ and $f_{\theta'}$ arbitrarily. It is easy to see that 
\begin{equation} \label{exeq:1}
    \mathcal{W}_1(\mathcal{D}(f_{\theta}), \mathcal{D}(f_{\theta'})) \le \epsilon |\tanh(\theta) - \tanh(\theta')|.
\end{equation}
\begin{align}
        \|f_{\theta} - f_{\theta'}\|^2 & = \int_{0}^{3\epsilon} (f_{\theta}(x) - f_{\theta'}(x))^2 p(x) dx \notag\\
        & = \int_{0}^{3\epsilon} \frac{\left(\tanh(\theta) - \tanh(\theta')\right)^2}{\epsilon^2} x^2 p(x) dx \notag\\
        & = \frac{\left(\tanh(\theta) - \tanh(\theta')\right)^2}{\epsilon^2}  \frac{1}{3\epsilon} \int_{0}^{3\epsilon} x^2 dx\notag \\
        & = \frac{\left(\tanh(\theta) - \tanh(\theta')\right)^2}{3\epsilon^3} \frac{(3\epsilon)^3}{3} \notag \\
        & = 3 \left(\tanh(\theta) - \tanh(\theta')\right)^2. \label{exeq:2}
\end{align}
As a result,
\begin{equation} \label{exeq:3}
    \|f_{\theta} - f_{\theta'}\| = \sqrt{3} |\tanh(\theta) - \tanh(\theta')|.
\end{equation}
Combining ~(\ref{exeq:1}) and ~(\ref{exeq:3}) results in the $\epsilon$-sensitivity:
\begin{equation*}
    \mathcal{W}_1(\mathcal{D}(f_{\theta}), \mathcal{D}(f_{\theta'})) \le \epsilon \|f_{\theta} - f_{\theta'}\|.
\end{equation*}

Also, this distribution map satisfies the bounded norm ratio property with any $C>3$ since:
\begin{align} \label{exeq:4}
    & \|f_{\theta}-f_{\theta'}\|_{f_{\theta^*}}^2 \notag \\ & = \left(f_{\theta}(\epsilon(\tanh(\theta^*)+2)) - f_{\theta'}(\epsilon(\tanh(\theta^*)+2))\right)^2 \notag \\
    & = \big((\tanh(\theta) - \tanh(\theta'))(\tanh(\theta^*)+2)\big)^2 \notag \\
    & > (\tanh(\theta) - \tanh(\theta'))^2,
\end{align}
where we used the fact that $(\tanh(\theta^*)+2)>1$.

Putting ~(\ref{exeq:2}) and ~(\ref{exeq:4}) together, we can write
\begin{equation*}
    \|f_{\theta} - f_{\theta'}\|^2 \le C\  \|f_{\theta}-f_{\theta'}\|_{f_{\theta^*}}^2,
\end{equation*}
for any $C>3$.\\
To verify that the space of functions is convex, we need to show that for any $\alpha \in [0,1]$, and any $\theta,\theta'$, there exists $\theta''$ such that $\alpha f_{\theta}(x) + (1-\alpha)f_{\theta'}(x) = f_{\theta''}(x)$, \mbox{$\forall x \in (0,3\epsilon]$}. To prove this, it suffices to show that $\theta''$ exists such that $\alpha \tanh(\theta) + (1-\alpha) \tanh(\theta') = \tanh(\theta'')$. This follows from the continuity of the $\tanh$ function.

The update rule of RRM is as follows:
\begin{align*}
    \theta_{t+1} & = \argmin_{\phi} \mathbb{E}_{z\sim \mathcal{D}(f_{\theta_t})} [\ell(f_{\phi}(x),y)]  \\ & = \argmin_{\phi} \ell(f_{\phi}(x),y)\bigg|_{x = \epsilon (\tanh(\theta_t)+2)}.
\end{align*}
Taking the derivative of the loss and setting it to zero results in:
\begin{equation*}
(\tanh(\theta_{t+1})+2)(\tanh(\theta_t)+2) = \frac{15}{4}.
\end{equation*}
So, if $\theta_t=\tanh^{-1}(\frac{-1}{2})$, then $\theta_{t+1} = \tanh^{-1}(\frac{1}{2})$ and if $\theta_t=\tanh^{-1}(\frac{1}{2})$, then $\theta_{t+1} = \tanh^{-1}(\frac{-1}{2})$ . 

In conclusion, while the space of functions is convex, the loss function satisfies assumptions (\aref{A3}) and (\aref{A4}) and the distribution map satisfies conditions \ref{eq:A1prime} and \eqref{A2}, RRM oscillates between $\tanh^{-1}(\frac{-1}{2})$ and $\tanh^{-1}(\frac{1}{2})$ with \mbox{$\theta_0 = \tanh^{-1}(\frac{-1}{2})$}, for any value of $\epsilon, \gamma, C>3$, including when $\frac{\sqrt{C\epsilon}M}{\gamma}<1$.
\end{proof}

\subsection{Convergence of RRM} \label{Sec2.2}
Here we state our main theoretical contribution, which provides sufficient conditions for repeated risk minimization to converge to a stable classifier with unique predictions. The theorem is in two parts. The first part provides convergence guarantees to a stable classifier when the class of functions $\mathcal{F}$ is convex. The second part shows that when the class of functions is not convex, we can converge to a neighborhood of a stable classifier that can get arbitrarily small based on an approximation error $\kappa$.
\begin{theorem}\label{thm}
Suppose that the loss \(\ell(f_{\theta}(x),y)\) is \mbox{$\gamma$-strongly} convex and continuously differentiable w.r.t \(f_{\theta}(x)\) (\aref{A3}) and the norm of its gradient w.r.t \(f_{\theta}(x)\) is upper bounded with \mbox{$M = \sup_{x,y,\theta} \|\nabla_{\hat{y}}\ell(f_{\theta}(x),y)\|$} (\aref{A4}). If the distribution map \(\mathcal{D}(.)\) is \(\epsilon\)-sensitive w.r.t Pearson $\chi^2$ divergence (\aref{A1}) and satisfies the bounded norm ratio property with parameter $C$ \eqref{A2}, then:

(a) when the function space $\mathcal{F}$ is convex, i.e., for any \mbox{$f, f' \in \mathcal{F}$} and $\alpha \in [0,1]$, $\alpha f + (1-\alpha) f'\in \mathcal{F}$,
\[ \|f_{G(\theta)} - f_{G(\theta')}\| \le \frac{\sqrt{C\epsilon}M}{\gamma} \|f_{\theta}-f_{\theta'}\|.\]
So if \( \frac{\sqrt{C\epsilon}M}{\gamma}<1\), $G$ is a contractive mapping and RRM converges to a unique stable classifier $f_{\theta_{\text{PS}}}$ at a linear rate: \[ \|f_{\theta_t} - f_{\theta_{\text{PS}}}\| \le (\frac{\sqrt{C\epsilon}M}{\gamma})^t \|f_{\theta_0}-f_{\theta_{\text{PS}}}\|\]

(b) when the function space $\mathcal{F}$ is not necessarily convex but can approximate $f^*_{\theta}(x) = \mathbb{E}_{p_{f_{\theta}}} [Y|X=x]$ for any $\theta$ with an error of at most $\kappa$, i.e., $\exists f\in \mathcal{F}$ s.t. $\|f-f^*_{\theta}\|\le \kappa$,\footnote{We also need to assume that we can approximate $f^* = \argmin\limits_{f\in \mathcal{M}} \mathbb{E}_{z\sim \mathcal{D}(f^*)} [\ell(f(x),y)],$ with at error of at most $\kappa$, where $\mathcal{M}$ is the set of all measurable functions. Since $\mathcal{M}$ is convex, we get from part (a) of the theorem that $f^*$ exists and is unique.} and the distribution map also satisfies the bounded norm ratio with parameter $c$~\eqref{A2prime}, applying RRM with squared error loss $\ell(f(x),y) = \frac{1}{2}\|f(x)-y\|^2$ will result in the following:
\begin{align*}
      \|f_{\theta_{t+1}} - f^*\| & \le \frac{2\sqrt{C \epsilon}M}{\gamma} \|f_{\theta_t} - f^*\| \notag \\ 
      & + \big(\frac{3\sqrt{C \epsilon}M}{\gamma} + \frac{2C}{\gamma c} + \sqrt{\frac{C}{c}}\big)\kappa.
\end{align*}
$f^* = \argmin\limits_{f\in \mathcal{M}} \mathbb{E}_{z\sim \mathcal{D}(f^*)} [\ell(f(x),y)],$ is the unique stable classifier under $\mathcal{M}$, the set of all measurable functions.\\
This implies that, if $\frac{2\sqrt{C \epsilon}M}{\gamma}<1$, RRM converges to a neighborhood around the stable solution at a linear rate where the size of this neighborhood depends on $\kappa$ and diminishes as $\kappa$ decreases: 
\begin{align*}
      \|f_{\theta_{t}} - f^*\| & \le \left(\frac{2\sqrt{C \epsilon}M}{\gamma}\right)^t \|f_{\theta_0} - f^*\| + O(\kappa).
\end{align*}


\end{theorem}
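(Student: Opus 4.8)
The plan is to handle both parts through the \emph{decoupled risk} $R_\theta(f):=\mathbb{E}_{z\sim\mathcal{D}(f_\theta)}\ell(f(x),y)$, whose minimizer over the admissible class is $f_{G(\theta)}$, exploiting that \aref{A3} makes $R_\theta$ strongly convex in $f$ with respect to the deployment-dependent norm $\|\cdot\|_{f_\theta}$.

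\emph{Part (a).} First I would record that, by \aref{A3}, integrating the pointwise strong-convexity inequality against $p_{f_\theta}$ turns the curvature term into $\frac{\gamma}{2}\|f-f'\|_{f_\theta}^2$, so $R_\theta$ is $\gamma$-strongly convex in the $\|\cdot\|_{f_\theta}$ norm. Writing $g=f_{G(\theta)}$ and $g'=f_{G(\theta')}$, convexity of $\mathcal{F}$ gives the first-order optimality conditions $\langle\nabla R_\theta(g),g'-g\rangle_{f_\theta}\ge 0$ and $\langle\nabla R_{\theta'}(g'),g-g'\rangle_{f_{\theta'}}\ge 0$; combined with strong convexity these yield
\[ R_\theta(g')-R_\theta(g)\ge\tfrac{\gamma}{2}\|g-g'\|_{f_\theta}^2, \qquad R_{\theta'}(g)-R_{\theta'}(g')\ge\tfrac{\gamma}{2}\|g-g'\|_{f_{\theta'}}^2 . \]
Adding these, the left side regroups into $\int\big(\ell(g'(x),y)-\ell(g(x),y)\big)\big(p_{f_\theta}(z)-p_{f_{\theta'}}(z)\big)\,dz$. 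I would then bound this using \aref{A4} (so that $|\ell(g'(x),y)-\ell(g(x),y)|\le M\|g'(x)-g(x)\|$) followed by Cauchy--Schwarz with $p_{f_\theta}$ as the reference measure, which produces exactly $M\,\|g-g'\|_{f_\theta}\,\sqrt{\chi^2(\mathcal{D}(f_{\theta'}),\mathcal{D}(f_\theta))}$; \aref{A1} then replaces the square root by $\sqrt{\epsilon}\,\|f_\theta-f_{\theta'}\|$. This Cauchy--Schwarz step is precisely where the $\chi^2$ divergence (rather than $\mathcal{W}_1$) is indispensable. Finally, lower-bounding the summed right side by AM--GM as $\gamma\,\|g-g'\|_{f_\theta}\|g-g'\|_{f_{\theta'}}$, cancelling the common factor, and converting $\|g-g'\|_{f_{\theta'}}$ to the base norm with \eqref{A2} (taking $f_{\theta^*}=f_{\theta'}$) yields $\|g-g'\|\le\frac{\sqrt{C\epsilon}M}{\gamma}\|f_\theta-f_{\theta'}\|$; the AM--GM trick is what removes the stray factor of $2$. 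Contraction together with Banach's fixed-point theorem then gives the unique $f_{\theta_{\text{PS}}}$ and the stated linear rate.

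\emph{Part (b).} The key observation is that for the squared error loss the unconstrained minimizer of $R_\theta$ over all measurable functions $\mathcal{M}$ is the conditional mean $f^*_\theta(x)=\mathbb{E}_{p_{f_\theta}}[Y\mid X=x]$, and a bias--variance decomposition shows the RRM step over a non-convex $\mathcal{F}$ is exactly the $\|\cdot\|_{f_{\theta_t}}$-projection of $f^*_{\theta_t}$ onto $\mathcal{F}$. Using the $\kappa$-approximation hypothesis together with \eqref{A2} and \eqref{A2prime} to pass between the base norm and $\|\cdot\|_{f_{\theta_t}}$, I would bound $\|f_{\theta_{t+1}}-f^*_{\theta_t}\|\le\sqrt{C/c}\,\kappa$. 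Since $\mathcal{M}$ is convex, part (a) applies to it and makes the map $g\mapsto f^*_{[g]}$ a contraction with fixed point $f^*$, giving control of $\|f^*_{\theta_t}-f^*\|$ by $\frac{\sqrt{C\epsilon}M}{\gamma}\|f_{\theta_t}-f^*\|$. Because the sensitivity assumptions are controlled on $\mathcal{F}$ while $f^*$ may lie in $\mathcal{M}\setminus\mathcal{F}$, this contraction must be routed through an $\mathcal{F}$-approximant $\tilde f$ of $f^*$ (the function supplied by the footnote's hypothesis), and the attendant triangle inequalities and \eqref{A2}/\eqref{A2prime} conversions are what inflate the contraction factor to $\frac{2\sqrt{C\epsilon}M}{\gamma}$ and generate the remaining $\big(\frac{3\sqrt{C\epsilon}M}{\gamma}+\frac{2C}{\gamma c}+\sqrt{C/c}\big)\kappa$ terms. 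One final triangle inequality assembles the one-step recursion, and unrolling the geometric recursion produces the closing $O(\kappa)$ statement.

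The hardest part will be part (b): non-convexity of $\mathcal{F}$ forbids first-order optimality, so I must lean entirely on the projection structure special to the squared loss and on the conditional-mean characterization, and I must carefully shuttle the approximation error $\kappa$ between the base norm and the various deployment-dependent norms via \eqref{A2} and \eqref{A2prime} while accounting for the idealized fixed point $f^*$ living in $\mathcal{M}$ rather than $\mathcal{F}$. Keeping track of which norm each quantity is measured in, and ensuring every invocation of \aref{A1} and Assumption~\ref{Assumption2} is applied to an admissible pair of functions, is the central bookkeeping obstacle; in part (a) the analogous crux is the Cauchy--Schwarz/$\chi^2$ bound paired with the AM--GM step that jointly deliver the clean constant $\frac{\sqrt{C\epsilon}M}{\gamma}$.
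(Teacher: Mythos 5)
Your proposal is correct in both parts. For part (a) you take essentially the paper's route with a minor variation: the paper states both strong-convexity inequalities under the single measure $p_{f_\theta}$, discards one gradient term using the first-order optimality of $f_{G(\theta)}$ over the convex $\mathcal{F}$, and then transports the surviving gradient integral between $p_{f_{\theta}}$ and $p_{f_{\theta'}}$ via Cauchy--Schwarz and \aref{A1}; you symmetrize instead (each risk under its own deployment measure), so the $\chi^2$ transport acts on loss differences rather than gradients and AM--GM recovers the full factor $\gamma$. Both give $\sqrt{C\epsilon}M/\gamma$, and your mean-value bound $|\ell(g'(x),y)-\ell(g(x),y)|\le M\|g'(x)-g(x)\|$ is indeed licensed by \aref{A4} because convexity of $\mathcal{F}$ keeps the segment inside the class. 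Part (b) is where you genuinely diverge, to your advantage. The paper routes through $f_{G(\theta^*)}$ for $f_{\theta^*}$ the best $\mathcal{F}$-approximant of $f^*$, which forces it to first prove a one-step contraction-with-error for $G$ on the non-convex $\mathcal{F}$ (inequality \eqref{eq:contraction}), and that in turn needs Proposition~\ref{prop:delta_approx} to lower-bound the gradient integral under $p_{f_{\theta'}}$ by $-\frac{\kappa\sqrt{C}}{c}\|f_{G(\theta)}-f_{G(\theta')}\|_{f_\theta}$. Your decomposition $\|f_{\theta_{t+1}}-f^*\|\le\|f_{\theta_{t+1}}-f^*_{\theta_t}\|+\|f^*_{\theta_t}-f^*\|$ skips all of that: the projection characterization of the RRM step (the paper's Lemma~\ref{lemma:argmin}) plus \eqref{A2prime} and \eqref{A2} gives $\|f_{\theta_{t+1}}-f^*_{\theta_t}\|\le\sqrt{C/c}\,\kappa$, and part (a) applied on the convex space $\mathcal{M}$ gives $\|f^*_{\theta_t}-f^*\|\le\frac{\sqrt{C\epsilon}M}{\gamma}\|f_{\theta_t}-f^*\|$, yielding the strictly tighter recursion $\|f_{\theta_{t+1}}-f^*\|\le\frac{\sqrt{C\epsilon}M}{\gamma}\|f_{\theta_t}-f^*\|+\sqrt{C/c}\,\kappa$, which implies the stated bound. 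Two corrections to your narrative: the factor $2$ in the paper does not come from triangle inequalities around $f^*$ --- it comes from losing the first-order optimality condition on a non-convex $\mathcal{F}$, so that only $\frac{\gamma}{2}$ of the strong-convexity term is usable in the analogue of the key inequality --- and in your scheme no factor $2$ arises, nor should you try to manufacture one. Also, routing the contraction through an $\mathcal{F}$-approximant does not remove the need to apply \aref{A1} and Assumption~\ref{Assumption2} to pairs involving $f^*\in\mathcal{M}\setminus\mathcal{F}$, since $f^*$ is itself defined through $\mathcal{D}(f^*)$; the paper's own proof makes the same implicit extension, so this is a shared caveat rather than a gap in your argument.
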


The assumption in part (b) is reasonable for a class of large or deep enough Neural Networks based on the fact that they are universal approximators as we explain in Remark \ref{remark:sobolev}.
Also as we mentioned earlier, assumptions (\aref{A3}) and (\aref{A4}) on $\ell$ are satisfied by the commonly-used Squared Error loss function, and this holds even in the presence of deep neural networks as predictors. To illustrate our results, we propose the \textit{Resample-if-Rejected} procedure in the following section and show that it satisfies assumptions (\aref{A1}), \eqref{A2}, and \eqref{A2prime}. We provide a proof sketch for Theorem~\ref{thm} part (a) here. The full proof of the theorem is available in the Supplementary Materials~\ref{proof_main_thm}.

\begin{proofsketch}
Fix \(\theta\) and \(\theta'\) in \(\Theta\). Let \(h\) and \(h'\) be mappings from $\mathcal{F}$ to $\mathbb{R}$ defined as follows: 
\begin{equation*}
h(f_{\hat{\theta}}) = E_{z\sim \mathcal{D}(f_{\theta})} [\ell(f_{\hat{\theta}}(x),y)] = \int \ell(f_{\hat{\theta}}(x),y) p_{f_{\theta}}(z) dz.
\end{equation*}
\begin{equation*}
h'(f_{\hat{\theta}}) = E_{z\sim \mathcal{D}(f_{\theta'})} [\ell(f_{\hat{\theta}}(x),y)] = \int \ell(f_{\hat{\theta}}(x),y) p_{f_{\theta'}}(z) dz.
\end{equation*}
Because of the strong convexity of $\ell(f_{\theta}(x),y)$ in $f_{\theta}(x)$ and the fact that $f_{G(\theta)}$ minimizes $h$, we can show that

\begin{align} \label{ps:eq1}
    & -\gamma \|f_{G(\theta)} - f_{G(\theta')}\|_{f_{\theta}}^2 \ge \notag \\ & \int \left(f_{G(\theta)}(x) - f_{G(\theta')}(x)\right)^\top \nabla_{\hat{y}}\ell(f_{G(\theta')}(x),y) p_{f_{\theta}}(z)dz,
\end{align}
where 
\[\|f_{G(\theta)} - f_{G(\theta')}\|_{f_{\theta}}^2 = \int \| f_{G(\theta)}(x) - f_{G(\theta')}(x)\|^2 p_{f_{\theta}}(x) dx.\]
Because of this $f_{\theta}$-dependent norm, assumption~\eqref{A2} is required so we can remove this dependency later.

Using $\epsilon$-sensitivity of $\mathcal{D}(.)$ w.r.t the $\chi^2$ divergence, and the bounded gradient norm assumption which states that there exists a finite value $M$ such that \mbox{$M = \sup_{x,y,\theta} \|\nabla_{\hat{y}}\ell(f_{\theta}(x),y)\|$}, alongside the fact that $f_{G(\theta')}$ minimizes $h'$, we derive that
\begin{align} \label{ps:eq2}
   & \int \left(f_{G(\theta)}(x) - f_{G(\theta')}(x)\right)^\top \nabla_{\hat{y}}\ell(f_{G(\theta')}(x),y) p_{f_{\theta}}(z) dz \ge \notag \\ & - M\sqrt{\epsilon}  \|f_{G(\theta)} - f_{G(\theta')}\|_{f_{\theta}} \|f_{\theta}-f_{\theta'}\|,
\end{align}
which provides a lower bound on the RHS of (\ref{ps:eq1}).

Combining (\ref{ps:eq1}) and (\ref{ps:eq2}) with the fact that the distribution map $\mathcal{D}(.)$ satisfies the bounded norm ratio property with parameter $C$ results in
\begin{equation*}
    \|f_{G(\theta)} - f_{G(\theta')}\|  \le \frac{\sqrt{C\epsilon}M}{\gamma} \|f_{\theta}-f_{\theta'}\|.
\end{equation*}
So for $\frac{\sqrt{C\epsilon}M}{\gamma}<1$, RRM converges to a stable classifier based on the Banach fixed-point theorem.

Setting $\theta=\theta_{t-1}$ and $\theta'=\theta_{\text{PS}}$, we can show that this convergence has a linear rate.
\end{proofsketch}

\begin{figure*}[t!]
	\centering
 		\includegraphics[width=0.9\textwidth]{./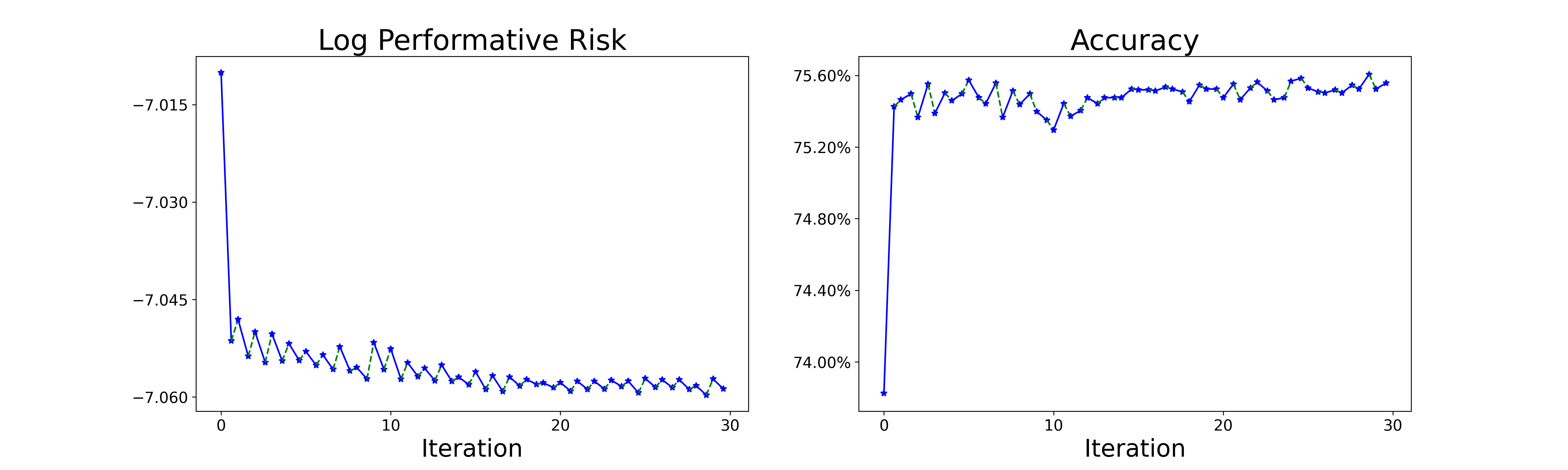}
	\caption{Evolution of log of performative risk (left) and accuracy (right) through iterations of RRM for $\delta=0.9.$  The blue lines show the changes in risk (accuracy) after optimizing on the distribution induced by the last model, and the green lines show the effect of the distribution shift on the risk (accuracy).}
 \label{fig:RRM_convergence}
\end{figure*} 

\section{$\epsilon$-SENSITIVITY OF THE RIR PROCEDURE} \label{RIR}
An example of strategic classification, which was introduced in Section~\ref{contribution}, occurs in social media when users’ posts get rejected because they violated the platform’s policies. In these cases, users usually re-post the same content but with different words in order to get accepted. Inspired by this application, we propose the \textit{Resample-if-Rejected (RIR)} procedure to model distribution shifts. Consider that we have a base distribution with pdf $p$ and a function $g: f_{\theta}(x) \mapsto g(f_{\theta}(x))$ that indicates the probability of rejection. Here we assume that $f_{\theta}(x)$ is a scalar. Let $\text{RIR}(f_{\theta})$ be the distribution resulting from deploying a model with prediction function $f_{\theta}$ under this procedure, and take $p_{f_{\theta}}$ as its pdf. The sampling procedure of $p_{f_{\theta}}$ is as follows:
\begin{itemize}
    \item Take a sample $x^*$ from $p$.
    \item Toss a coin whose probability of getting a head is \mbox{$1-g(f_{\theta}(x^*))$}. If it comes head, output $x^*$, and if it comes tail, output another sample from $p$.
\end{itemize}
Consider $X$ to be a random variable with probability distribution $p(x)$. $p_{f_{\theta}}$ is defined mathematically as
\begin{align*}
        p_{f_{\theta}}(x) & = p(x) \Big(1-g(f_{\theta}(x))\Big) + p(x) \mathbb{E}_X[g(f_{\theta}(X))]
      \\ & = p(x) (1-g(f_{\theta}(x)) + C_{\theta}),
\end{align*}
where $C_{\theta} = \mathbb{E}_X[g(f_{\theta}(X))] = \int g(f_{\theta}(x')) p(x') dx'$.

The following theorem shows that the distribution resulting from the RIR procedure satisfies our conditions on the distribution map. This Theorem is proved in the Supplementary Materials.

\begin{theorem}\label{thm3}
If $f_{\theta}(x)\in [0,1-\delta],\  \forall \theta\in\Theta$ for some fixed \mbox{$0<\delta<1$}, then for $g(f_{\theta}(x)) = f_{\theta}(x) + \delta$, $\text{RIR}(.)$ is \mbox{$\frac{1}{\delta}$-sensitive} w.r.t $\chi^2$ divergence (\aref{A1}) and satisfies the bounded norm ratio property for $C=\frac{1}{\delta}$ \eqref{A2} and \mbox{$c=\frac{1}{2-\delta}$ \eqref{A2prime}.}
\end{theorem}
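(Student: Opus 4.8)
The plan is to obtain a closed form for the induced density, read off a two-sided density ratio, deduce the bounded norm ratio conditions essentially for free, and then verify the $\chi^2$ sensitivity by a direct computation.

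First I would substitute $g(f_\theta(x)) = f_\theta(x)+\delta$ into the general RIR density derived just before the statement. Writing $\bar f_\theta := \mathbb{E}_X[f_\theta(X)]$, the normalizing constant becomes $C_\theta = \bar f_\theta + \delta$, and the two occurrences of $\delta$ cancel, leaving the clean form
\[
p_{f_\theta}(x) = p(x)\bigl(1 - f_\theta(x) + \bar f_\theta\bigr).
\]
Since $0 \le f_\theta(x) \le 1-\delta$ for every $\theta$ and $x$, the average $\bar f_\theta$ lies in the same interval, so the multiplier obeys $\delta \le 1 - f_\theta(x) + \bar f_\theta \le 2-\delta$. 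This is exactly the two-sided density ratio $\delta\, p(x) \le p_{f_\theta}(x) \le (2-\delta)\, p(x)$.

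The bounded norm ratio conditions then follow immediately, as the paragraph after Assumption~\ref{Assumption2} indicates. Multiplying the pointwise bound $p(x) \le \tfrac{1}{\delta}\,p_{f_{\theta^*}}(x)$ by $\|f_\theta(x) - f_{\theta'}(x)\|^2 \ge 0$ and integrating in $x$ gives \eqref{A2} with $C = \tfrac{1}{\delta}$; likewise, integrating $\tfrac{1}{2-\delta}\,p_{f_{\theta^*}}(x) \le p(x)$ against the same nonnegative weight gives \eqref{A2prime} with $c = \tfrac{1}{2-\delta}$.

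The substantive step is the $\chi^2$ sensitivity. I would first reduce the instance-level divergence to the $X$-marginal: since RIR only resamples features and leaves the conditional $p(y\mid x)$ untouched, writing $p_{f_\theta}(z) = p_{f_\theta}(x)\,p(y\mid x)$ and integrating out $y$ (using $\int p(y\mid x)\,dy = 1$) collapses $\chi^2(\mathcal{D}(f_{\theta'}), \mathcal{D}(f_\theta))$ to $\int \frac{(p_{f_{\theta'}}(x) - p_{f_\theta}(x))^2}{p_{f_\theta}(x)}\,dx$. Setting $d(x) := f_\theta(x) - f_{\theta'}(x)$ and $\bar d := \mathbb{E}_X[d(X)]$, the closed form above gives $p_{f_{\theta'}}(x) - p_{f_\theta}(x) = p(x)\,(d(x) - \bar d)$. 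Substituting, cancelling one factor of $p(x)$, and using the lower bound $1 - f_\theta(x) + \bar f_\theta \ge \delta$,
\[
\chi^2\bigl(\mathcal{D}(f_{\theta'}), \mathcal{D}(f_\theta)\bigr) = \int \frac{p(x)\,(d(x) - \bar d)^2}{1 - f_\theta(x) + \bar f_\theta}\,dx \le \frac{1}{\delta}\int p(x)\,(d(x) - \bar d)^2\,dx.
\]
The remaining integral is $\operatorname{Var}_X[d(X)] \le \mathbb{E}_X[d(X)^2] = \|f_\theta - f_{\theta'}\|^2$, so $\chi^2 \le \tfrac{1}{\delta}\|f_\theta - f_{\theta'}\|^2$, establishing \aref{A1} with $\epsilon = \tfrac{1}{\delta}$.

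I expect the main obstacle to be bookkeeping rather than any single estimate, at two points: correctly carrying the mean-subtraction term $\bar d$ (which is precisely what lets the \emph{variance}, rather than the raw second moment, appear, and is what the clean constant relies on), and making explicit the reduction from instances $z=(x,y)$ to the $x$-marginal by stating that RIR fixes the label conditional $p(y\mid x)$. Once the closed form for $p_{f_\theta}$ is in hand, the density-ratio bounds and hence \eqref{A2} and \eqref{A2prime} are elementary.
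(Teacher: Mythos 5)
Your proposal is correct and follows essentially the same route as the paper's proof: the closed form $p_{f_\theta}(x)=p(x)(1-g(f_\theta(x))+C_\theta)$, the pointwise bounds $\delta\le 1-f_\theta(x)+\bar f_\theta\le 2-\delta$ yielding \eqref{A2} and \eqref{A2prime}, the reduction of the $\chi^2$ divergence to the $X$-marginal via the unchanged label conditional, and the key cancellation $\int p(x)(f_\theta(x)-f_{\theta'}(x))\,dx=C_\theta-C_{\theta'}$ that makes the variance (rather than the raw second moment) appear. The only cosmetic difference is that you absorb the $\delta$'s into $\bar f_\theta$ and name the resulting quantity a variance directly, where the paper expands the square and cancels the cross term by hand.
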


\begin{remark} \label{remark:RIR}
    Consider a strategic classification task where the distribution reacts to a model with the prediction function $f_{\theta}$ in accordance with the RIR procedure. Suppose the predictions satisfy $f_{\theta}(x)\in[0,1-\delta]$, the label $y$ is in $\{0,1-\delta\}$, and we use the Squared Error loss \mbox{$\ell(f_{\theta}(x),y) = \frac{1}{2} (f_{\theta}(x) - y)^2$} which is 1-strongly convex. According to Theorem~\ref{thm3}, the distribution map is \mbox{$\frac{1}{\delta}$-sensitive} w.r.t $\chi^2$ divergence  and satisfies the bounded norm ratio property for $C=\frac{1}{\delta}$ and $c = \frac{1}{2-\delta}$. Also, \mbox{$M = \sup_{x,y, \theta} |\ell'(f_{\theta}(x),y)|$} is equal to \mbox{$\sup_{x,y, \theta}|f_{\theta}(x)-y|= 1-\delta$}. Putting all these together, the convergence rate of RRM in Theorem~\ref{thm} is equal to $\frac{\sqrt{C\epsilon}M}{\gamma} = \frac{1-\delta}{\delta}$ for part (a) and $\frac{2\sqrt{C\epsilon}M}{\gamma} = \frac{2(1-\delta)}{\delta}$ for part (b), which are less than 1 for $\delta>0.5$ and $\delta>0.67$ respectively.
    
\end{remark}

We will use this remark in the experiments section.

In supervised learning, $x$ corresponds to a set of features. So far in the RIR procedure, we resample the whole set of features, though we can also resample only the strategic features, and Theorem~\ref{thm3} still holds in this case if strategic and non-strategic features are independent as shown following the proof of this theorem in the Supplementary materials~\ref{proof_thm_RIR}. In our simulations in the next section, $x\in \mathbb{R}^d$ is the set of features of individuals applying to get loans from a bank. These features are divided into two sets: $\textit{strategic}$ and $\textit{non-strategic}$. Strategic features are those that can be (easily) manipulated without affecting the true label, e.g., the number of open credit lines and loans. Non-strategic features, however, can be seen as causes of the label and include monthly income for example. In our experiments, we resample only strategic features as these are the ones that people can manipulate more easily.

\section{EXPERIMENTS} \label{experiments}
We complement our theoretical results with experiments on a credit-scoring task and illustrate how they support our claims. We implemented our simulations based on \citet{performative}'s code in the Whynot Python package \citep{miller2020whynot} and changed it according to our settings so that we can use the auto-differentiation of PyTorch\footnote{\url{https://github.com/mhrnz/Performative-Prediction-with-Neural-Networks}}. The strategic classification task of credit scoring is a two-player game between a bank that predicts the creditworthiness of loan applicants, and individuals who strategically manipulate their features to alter the classification outcome. We run the simulations using Kaggle's \textit{Give Me Some Credit} dataset \citep{Kaggle}, with features $x\in \mathbb{R}^{11}$ corresponding to applicants' information along with their label $y\in\{0,1\}$, where $y=1$ indicates that the applicant defaulted and $y=0$ otherwise.

In our simulations, we assume that the data distribution induced by the classifier $f_{\theta}$ shifts according to the RIR procedure where strategic features are resampled with the probability of rejection $g(f_{\theta}(x)) = f_{\theta}(x)+\delta$. Assuming that strategic and non-strategic features are independent, resampling strategic features can be implemented by simply choosing these features from another data point at random. For the classifier, we used a two-layer neural network with a hidden-layer size of 6. The choice of hidden layer size in our network was arbitrary; Figure \ref{fig:diff_hidden_size} shows convergence for different hidden size values. In the network, we use a LeakyReLU activation after the first layer, and a scaled-sigmoid activation function after the second layer to bring the outcome $f_{\theta}(x)$ to the interval $[0, 1-\delta]$. This way we make sure that $g(f_{\theta}(x))\in[\delta,1]$ is a valid probability and the assumption of Theorem~\ref{thm3} is satisfied. Since the outcome $f_{\theta}(x)$ is in $[0,1-\delta]$, we change the label $1$ to $1-\delta$.
So $y=1-\delta$ corresponds to default, and the higher the value of $f_{\theta}(x)$, the greater the chance of rejection.
The objective is to minimize the expectation of the Squared Error loss function over instances, i.e. $\mathbb{E}[\frac{1}{2}(f_{\theta}(x) - y)^2]$. 

The definition of RRM requires solving an exact minimization problem at each optimization step; however, we solve this optimization problem approximately using several steps of gradient descent until the absolute difference of two consecutive risks is less than the tolerance of $10^{-9}$. Also, note that running the same configuration twice might result in different plots because of the randomness that exists in the resampling phase.

\begin{figure*}[t!]
\centering \includegraphics[width=0.8\textwidth]{./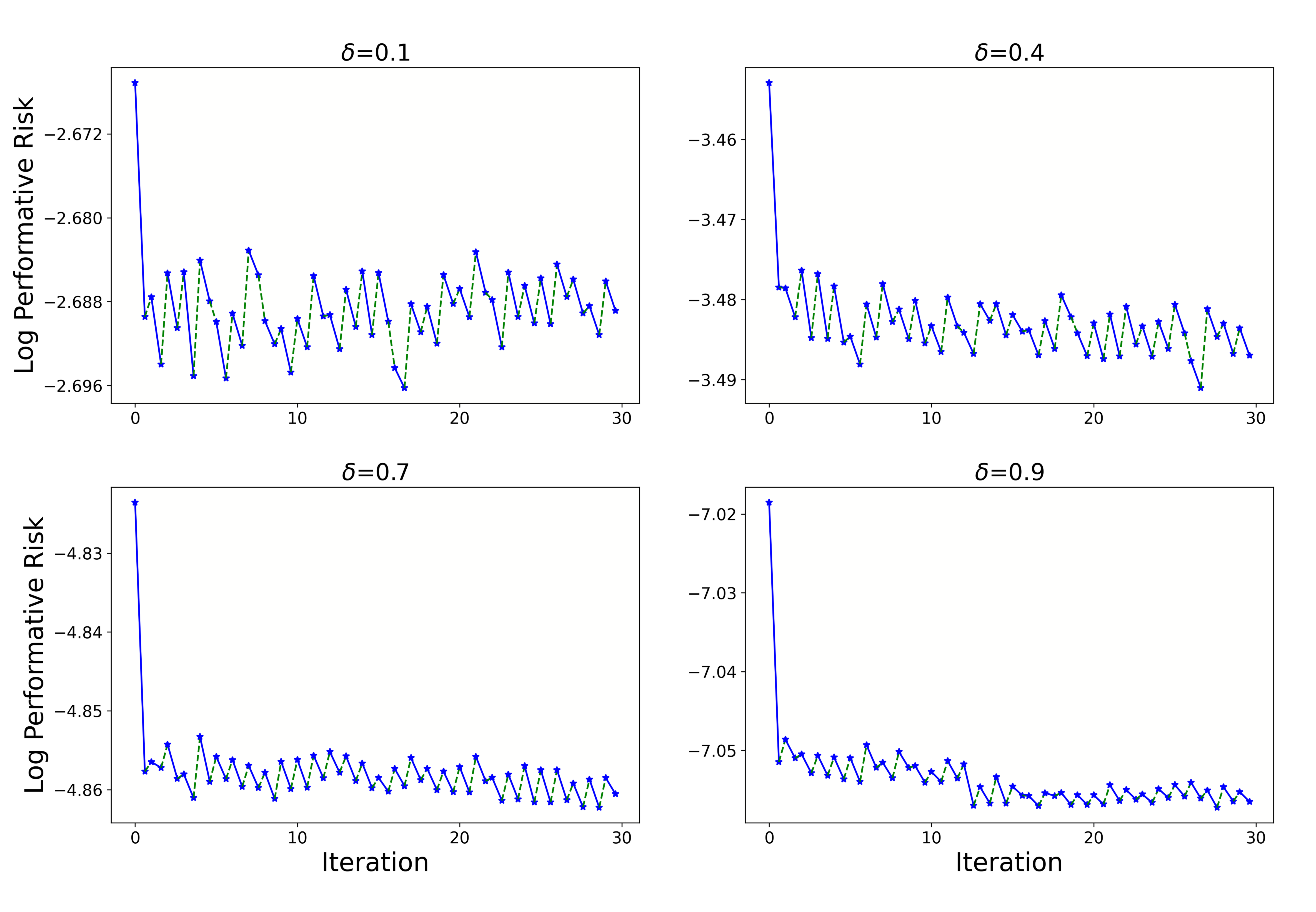}
\caption{Evolution of log of performative risk for different values of $\delta=0.1, 0.4, 0.7, 0.9$ through iterations of RRM.}
\label{fig:diff_delta}
\end{figure*} 

\begin{figure*}[t!]
\centering \includegraphics[width=0.8\textwidth]{./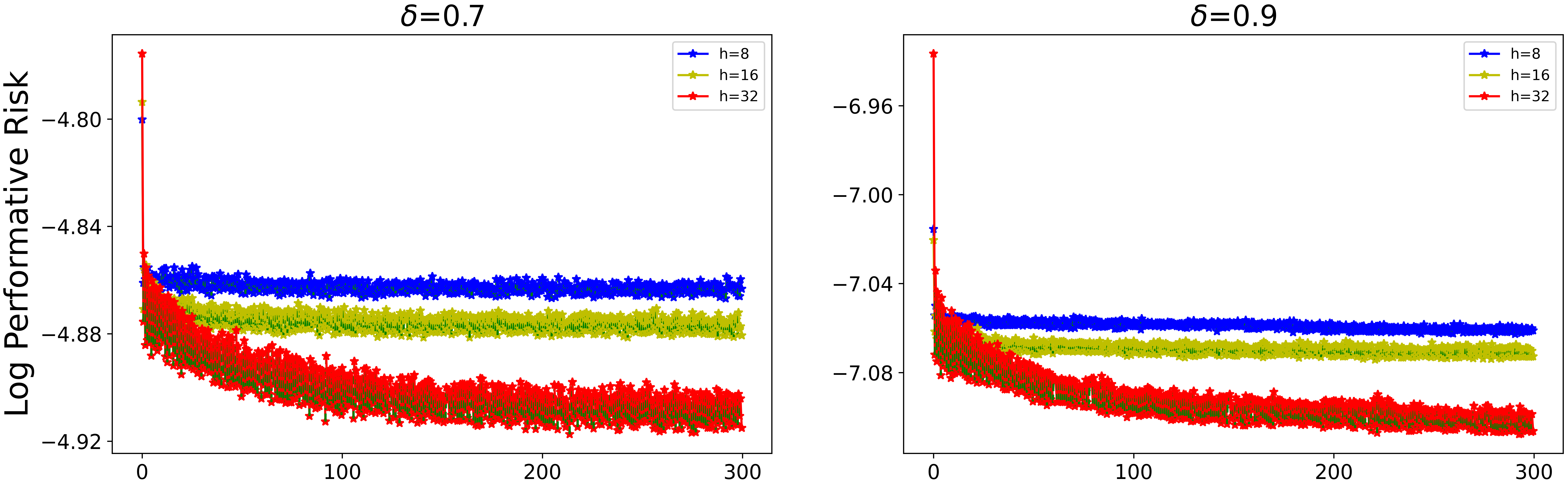}
\caption{Evolution of log of performative risk through iterations of RRM for different values of hidden size $h=8,16,32$ for $\delta=0.7$ and $\delta=0.9$.}
\label{fig:diff_hidden_size}
\end{figure*} 

Figure \ref{fig:RRM_convergence} shows the evolution of the log of performative risk (left) and accuracy (right) through iterations of RRM for \mbox{$\delta=0.9$}.  The blue lines show the changes in risk (accuracy) after optimizing on the distribution induced by the last model, and the green lines show the effect of the distribution shift on the risk (accuracy). We chose to plot the log of performative risk instead of its own value only for illustration purposes. Figure \ref{fig:diff_delta} shows the log of Performative Risk for different values of $\delta=0.1, 0.4, 0.7, 0.9$. The plot for $\delta=0.9$ is generated through a different run than Figure \ref{fig:RRM_convergence}. Based on Remark~\ref{remark:RIR}, for $\delta=0.7$ and $\delta=0.9$ we should see convergence behavior, though for $\delta<0.67$, our theory neither gives a guarantee of convergence nor claims that repeated retraining will diverge, so we might or might not see convergence behavior for $\delta=0.1$ or $\delta=0.4$. What we see in Figure \ref{fig:diff_delta} is aligned with our expectations. It is important to note that for smaller $\delta$, the value of $\epsilon$ which indicates the strength of performative effects is larger, and for high performative effects, it is more difficult for the model to converge since the distribution is allowed to move more after the model's deployment.

On a high level, we interpret the stable classifier to be a model that relies less on non-strategic features for classification. Throughout the training, for a fixed data point $z=(x,y)$ where $x=(x_s, x_f)$ for $x_s$ being the strategic features and $x_f$ being the non-strategic ones, the model sees the same $x_f$ but different values for $x_s$ chosen randomly, all with the same label $y$. So intuitively, the model would learn to rely less on strategic features and more on non-strategic ones for classification, and this makes it more robust to the strategic behavior of agents.

\section{DISCUSSION AND FUTURE WORK}

In this paper, we contribute the first set of convergence guarantees for finding performative stable models on problems where the risk is allowed to be non-convex with respect to parameters. 
This is an important development: our results pertain to modern machine learning models, like neural networks.

We achieve these stronger results by appealing to functional analytical tools, but also making slightly stronger assumptions on the performative feedback loop:
rather than assuming that the distribution is $\eps$-sensitive to parameters as measured by Wasserstein distance, 
we instead assume that the distribution is $\eps$-sensitive to {\em predictions} as measured by the $\chi^2$ divergence.

On one hand, only assuming sensitivity to predictions instead of parameters is a step in the right direction. 
None of the big applications of performative prediction justify sensitivity to model parameters.
As a matter of fact, many of the applications would motivate moving one step further in that direction: 
performative behavior in machine learning systems often manifests as a function of {\em decisions} or {\em actions} that rely on a prediction.
Those decisions or actions are observed by a population that reacts by changing its behavior.
We leave this important problem setting of studying sensitivity to decisions for future work.

On the other hand, $\chi^2$ sensitivity is stronger and implies Wassertstein sensitivity. 
Furthermore, because we use a variable norm that depends on parameters in our analysis, we make an extra assumption that the fixed norm is bounded by the coefficients of this variable norm.
While we provide in Section~\ref{RIR} a well-motivated, concrete example of a performative problem that satisfies both of these conditions, it is nonetheless an interesting open question to wonder how much our analytical assumptions can be loosened.

\section{SOCIETAL IMPACT}
We believe that the deployment of models that can have an impact on the behavior of people (i.e., are performative) should be considered with care, more especially for some critical applications such as elections or regulation of content on social media platforms. 
Our work proposes a new analysis of an existing algorithm that aims at learning performatively stable classifiers. Since the nature of our work is mainly theoretical and does not introduce new algorithms, it does not have a direct societal impact beyond the one described in the original paper on performative prediction. However, since our work supports the use of much more powerful models (e.g., NNs) in performative problems, and this increased power comes with increased responsibility, we should be mindful of the potential for undue influence on society while using this framework.

\subsubsection*{Acknowledgements}
We thank Simon Lacoste-Julien, Quentin Bertrand, Michael Przystupa, and Reza Bayat for their helpful feedback on the manuscript. We would also like to thank Amjad Almahairi for our insightful discussions. 
Ioannis Mitliagkas acknowledges support by an NSERC Discovery grant (RGPIN-2019-06512), a Samsung grant, \mbox{and a Canada CIFAR AI chair}.
\bibliography{paper.bib}
\bibliographystyle{abbrvnat}

\clearpage
\appendix
\onecolumn
{
\aistatstitle{Supplementary Materials: Performative Prediction with Neural Networks}
\section{PROOF OF THEOREMS}
\subsection{Proof of Theorem \ref{thm}}\label{proof_main_thm}
For proving Theorem~\ref{thm}, we were inspired by the proof of Theorem 3.5 in \cite{performative}, but our proof is significantly different from theirs since our analysis is dependent on the prediction function and we need to use infinite-dimensional optimization.
\begin{proof}
Fix \(\theta\) and \(\theta'\) in \(\Theta\). Let \(h:\mathcal{F} \to \mathbb{R}\) and \(h':\mathcal{F} \to \mathbb{R}\) be two functionals defined as follows: 
\begin{equation} \label{eq:a1}
h(f_{\hat{\theta}}) = E_{z\sim \mathcal{D}(f_{\theta})} [\ell(f_{\hat{\theta}}(x),y)] = \int \ell(f_{\hat{\theta}}(x),y) p_{f_{\theta}}(z) dz
\end{equation}
\begin{equation} \label{eq:a2}
h'(f_{\hat{\theta}}) = E_{z\sim \mathcal{D}(f_{\theta'})} [\ell(f_{\hat{\theta}}(x),y)] = \int \ell(f_{\hat{\theta}}(x),y) p_{f_{\theta'}}(z) dz
\end{equation}
where each data point $z$ is a pair of features $x$ and label $y$.

For a fixed \(z=(x,y)\), due to strong convexity of \(\ell(f_{\theta}(x),y)\) in $f_{\theta}(x)$ we have:
\begin{equation} \label{eq:a3}
\ell(f_{G(\theta)}(x),y) - \ell(f_{G(\theta')}(x),y) \ge \left(f_{G(\theta)}(x) - f_{G(\theta')}(x)\right)^\top \nabla_{\hat{y}}\ell(f_{G(\theta')}(x),y) + \frac{\gamma}{2} \| f_{G(\theta)}(x) - f_{G(\theta')}(x)\|^2.
\end{equation}
Now take integral over \(z\), and define \(\|f_{G(\theta)} - f_{G(\theta')}\|_{f_{\theta}}^2 = \int \| f_{G(\theta)}(x) - f_{G(\theta')}(x)\|^2 p_{f_{\theta}}(z) dz\) \big(which is equal to 
\(\int \| f_{G(\theta)}(x) - f_{G(\theta')}(x)\|^2 p_{f_{\theta}}(x) dx\)\big):
\begin{equation} \label{eq:a4}
h(f_{G(\theta)}) - h(f_{G(\theta')}) \ge \left(\int \left(f_{G(\theta)}(x) - f_{G(\theta')}(x)\right)^\top \nabla_{\hat{y}}\ell(f_{G(\theta')}(x),y) p_{f_{\theta}}(z) dz \right)+ \frac{\gamma}{2} \|f_{G(\theta)} - f_{G(\theta')}\|_{f_{\theta}}^2.
\end{equation}
Similarly:
\begin{equation} \label{eq:a5}
h(f_{G(\theta')}) - h(f_{G(\theta)}) \ge \left(\int \left(f_{G(\theta')}(x) - f_{G(\theta)}(x)\right)^\top \nabla_{\hat{y}}\ell(f_{G(\theta)}(x),y) p_{f_{\theta}}(z) dz \right)+ \frac{\gamma}{2} \|f_{G(\theta)} - f_{G(\theta')}\|_{f_{\theta}}^2.
\end{equation}
To ensure clarity of the proof, we complete the proofs of parts (a) and (b) individually in the following sections, despite their significant similarities.
\vfill
\subsubsection{part (a)}
Adding equations (\ref{eq:a4}) and (\ref{eq:a5}) and knowing that \(f_{G(\theta)}\) minimizes \(h\), it is enough to show that
\begin{equation} \label{eq:a6}
\int \left(f_{G(\theta')}(x) - f_{G(\theta)}(x)\right)^\top \nabla_{\hat{y}}\ell(f_{G(\theta)}(x),y) p_{f_{\theta}}(z) dz \ge 0.
\end{equation}
to conclude:
\begin{equation} \label{eq:a7}
-\gamma \|f_{G(\theta)} - f_{G(\theta')}\|_{f_{\theta}}^2 \ge \int \left(f_{G(\theta)}(x) - f_{G(\theta')}(x)\right)^\top \nabla_{\hat{y}}\ell(f_{G(\theta')}(x),y) p_{f_{\theta}}(z) dz.
\end{equation}
This is a key inequality that we will use later in the proof.

Now let's prove inequality (\ref{eq:a6}). Let \(\eta = f_{G(\theta')} - f_{G(\theta)}\). Since the function space is convex, for every \(\alpha\in [0,1]\), \(f_{G(\theta)} + \alpha \eta\) is in the function space. We know that \(f_{G(\theta)}\) is a minimizer of \(h\), so for any $\delta\geq 0$ we have 
\begin{equation}
     h(f_{G(\theta)} + \alpha \eta)  \geq h(f_{G(\theta)}).
\end{equation}
Thus, under the assumption that  $\hat y\mapsto \ell(\hat y,y)\, \forall y$ is continuously differentiable and that there exists $M<\infty$ such that $M = \sup_{x,y,\theta} \|\nabla_{\hat{y}}\ell(f_{\theta}(x),y)\|$, we have that $L:\alpha \mapsto \ell(f_{G(\theta)}(x) + \alpha \eta(x),y)$ is continuously differentiable with $|L'(\alpha,x,y)| =  |\eta(x)^\top \nabla_{\hat{y}}\ell(f_{G(\theta)}(x)+\alpha \eta(x),y)| \leq \|\eta(x)\| \cdot M$ which is an integrable function independent of $\alpha$ (with respect to the distribution $p_{f_\theta}$)
we can use the standard differentiation under the integral lemma \citep[Theorem 6.28]{klenke2013probability} to get that,
\begin{align} \label{eq:a12}
0 & \leq \lim_{\alpha\rightarrow 0} \frac{h(f_{G(\theta)} + \alpha \eta)  - h(f_{G(\theta)})}{\alpha} \notag \\
& = \lim_{\alpha\rightarrow 0} \int \frac{ \ell(f_{G(\theta)}(x) + \alpha \eta(x),y) - \ell(f_{G(\theta)}(x),y)}{\alpha}p_{f_{\theta}}(z)dz \notag \\
& =   \int \lim_{\alpha\rightarrow 0} \frac{ \ell(f_{G(\theta)}(x) + \alpha \eta(x),y) - \ell(f_{G(\theta)}(x),y)}{\alpha}p_{f_{\theta}}(z)dz \notag \\
& = \int \eta(x)^\top \nabla_{\hat{y}}\ell(f_{G(\theta)}(x),y)p_{f_{\theta}}(z)dz \notag \\
& = \int (f_{G(\theta')}(x) - f_{G(\theta)}(x))^\top \nabla_{\hat{y}}\ell(f_{G(\theta)}(x),y)p_{f_{\theta}}(z)dz.
\end{align}
This proves inequality (\ref{eq:a6}).
Now recall that the distribution map over data is \(\epsilon\)-sensitive w.r.t Pearson $\chi^2$ divergence, i.e.
\begin{equation} \label{eq:a13}
\chi^2(\mathcal{D}(f_{\theta'}), \mathcal{D}(f_{\theta})) \le \epsilon \|f_{\theta}-f_{\theta'}\|^2.
\end{equation}
With this in mind, we do the following calculations:
\begin{align*}
&\left| \int \left(f_{G(\theta)}(x) - f_{G(\theta')}(x)\right)^\top \nabla_{\hat{y}}\ell(f_{G(\theta')}(x),y) p_{f_{\theta}}(z) dz -  \int \left(f_{G(\theta)}(x) - f_{G(\theta')}(x)\right)^\top\nabla_{\hat{y}}\ell(f_{G(\theta')}(x),y) p_{f_{\theta'}}(z) dz\right|\\
& = \left| \int \left(f_{G(\theta)}(x) - f_{G(\theta')}(x)\right)^\top\nabla_{\hat{y}}\ell(f_{G(\theta')}(x),y) \left(p_{f_{\theta}}(z)-p_{f_{\theta'}}(z)\right) dz \right|\\
& \overset{(*)}{\le} \int  \left| \left(f_{G(\theta)}(x) - f_{G(\theta')}(x)\right)^\top \nabla_{\hat{y}}\ell(f_{G(\theta')}(x),y) \left(p_{f_{\theta}}(z)-p_{f_{\theta'}}(z)\right)\right| dz \\
& \le M \int  \left| \|f_{G(\theta)}(x) - f_{G(\theta')}(x)\| \left(p_{f_{\theta}}(z)-p_{f_{\theta'}}(z)\right)\right| dz \\
& = M \int\left| \|f_{G(\theta)}(x) - f_{G(\theta')}(x)\| \frac{p_{f_{\theta}}(z)-p_{f_{\theta'}}(z)}{p_{f_{\theta}}(z)} p_{f_{\theta}}(z) \right| dz\\
& = M \left| \int \big| \|f_{G(\theta)}(x) - f_{G(\theta')}(x)\| \frac{p_{f_{\theta}}(z)-p_{f_{\theta'}}(z)}{p_{f_{\theta}}(z)} \big| p_{f_{\theta}}(z)dz \right|\\
& \overset{\text{Cauchy-Schwarz Ineq.}}{\le} M \left(\int \|f_{G(\theta)}(x) - f_{G(\theta')}(x)\|^2 p_{f_{\theta}}(z) dz\right)^{\frac{1}{2}} \left(\int \left(\frac{p_{f_{\theta}}(z)-p_{f_{\theta'}}(z)}{p_{f_{\theta}}(z)}\right)^2 p_{f_{\theta}}(z) dz\right)^{\frac{1}{2}} \\
& = M \|f_{G(\theta)} - f_{G(\theta')}\|_{f_{\theta}} \sqrt{\chi^2(\mathcal{D}(f_{\theta'}), \mathcal{D}(f_{\theta}))} \\
& \overset{(\ref{eq:a13})}{\le} M\sqrt{\epsilon}  \|f_{G(\theta)} - f_{G(\theta')}\|_{f_{\theta}} \|f_{\theta}-f_{\theta'}\|
\end{align*}
\((*)\) comes from the fact that $\left|\int f(x)dx\right|\le \int |f(x)|dx$, and the Cauchy-Schwarz inequality states that $|\mathbb{E} [XY]|\le \sqrt{\mathbb{E}[X^2]\mathbb{E}[Y^2]}$.\\
We conclude from the above derivations that:
\begin{align} \label{eq:a14}
& \left| \int \left(f_{G(\theta)}(x) - f_{G(\theta')}(x)\right)^\top \nabla_{\hat{y}}\ell(f_{G(\theta')}(x),y) p_{f_{\theta}}(z) dz -  \int \left(f_{G(\theta)}(x) - f_{G(\theta')}(x)\right)^\top\nabla_{\hat{y}}\ell(f_{G(\theta')}(x),y) p_{f_{\theta'}}(z) dz\right| \notag \\ & \le M\sqrt{\epsilon}  \|f_{G(\theta)} - f_{G(\theta')}\|_{f_{\theta}} \|f_{\theta}-f_{\theta'}\|.
\end{align}
Similar to inequality (\ref{eq:a6}), since \(f_{G(\theta')}\) minimizes \(h'\), one can prove:
\begin{equation} \label{eq:a15}
\int \left(f_{G(\theta)}(x) - f_{G(\theta')}(x)\right)^\top \nabla_{\hat{y}}\ell(f_{G(\theta')}(x),y) p_{f_{\theta'}}(z) dz \ge 0.
\end{equation}
From (\ref{eq:a7}) we know that \(\int \left(f_{G(\theta)}(x) - f_{G(\theta')}(x)\right)^\top \nabla_{\hat{y}}\ell(f_{G(\theta')}(x),y) p_{f_{\theta}}(z) dz\) is negative, so with this fact alongside (\ref{eq:a14}) and (\ref{eq:a15}), we can write:
\begin{equation} \label{eq:a16}
\int \left(f_{G(\theta)}(x) - f_{G(\theta')}(x)\right)^\top \nabla_{\hat{y}}\ell(f_{G(\theta')}(x),y) p_{f_{\theta}}(z) dz \ge - M\sqrt{\epsilon}  \|f_{G(\theta)} - f_{G(\theta')}\|_{f_{\theta}} \|f_{\theta}-f_{\theta'}\|.
\end{equation}
Combining (\ref{eq:a7}) and (\ref{eq:a16}), we will get:
\begin{align} \label{eq:a17}
&\gamma \|f_{G(\theta)} - f_{G(\theta')}\|_{f_{\theta}}^2 \le M\sqrt{\epsilon}  \|f_{G(\theta)} - f_{G(\theta')}\|_{f_{\theta}} \|f_{\theta}-f_{\theta'}\| \notag \\
& \Rightarrow \|f_{G(\theta)} - f_{G(\theta')}\|_{f_{\theta}} \le \frac{\sqrt{\epsilon}M}{\gamma} \|f_{\theta}-f_{\theta'}\|
\end{align}
Since the distribution map satisfies the bounded norm ratio assumption with parameter $C$, we can write:
\begin{align}
\|f_{G(\theta)} - f_{G(\theta')}\|^2 & = \int \|f_{G(\theta)}(x) - f_{G(\theta')}(x)\|^2 p(x) dx \notag \\& \le C \int \|f_{G(\theta)}(x) - f_{G(\theta')}(x)\|^2 p_{f_{\theta}}(x) dx \notag \\& = C \|f_{G(\theta)} - f_{G(\theta')}\|_{f_{\theta}}^2
\end{align}
Consequently,
\begin{equation} \label{eq:a19}
\|f_{G(\theta)} - f_{G(\theta')}\| \le \sqrt{C} \|f_{G(\theta)} - f_{G(\theta')}\|_{f_{\theta}}
\end{equation}
Using (\ref{eq:a19}) in (\ref{eq:a17}) results in:
\begin{equation}
    \|f_{G(\theta)} - f_{G(\theta')}\|  \le \frac{\sqrt{C\epsilon}M}{\gamma} \|f_{\theta}-f_{\theta'}\|.
\end{equation}
So if $\frac{\sqrt{C\epsilon}M}{\gamma}<1$, $G$ is a contractive mapping and RRM converges to a stable classifier based on Banach fixed point theorem.

If we set $\theta=\theta_{t-1}$ and $\theta' = \theta_{\text{PS}}$ for $\theta_{\text{PS}}$ being a stable classifier, we know that $G(\theta) = \theta_t$ and $G(\theta') = \theta_{\text{PS}}$. So we will have:
\begin{align} \label{eq:a21}
         \|f_{\theta_t} - f_{\theta_{\text{PS}}}\| & \le \frac{\sqrt{C\epsilon}M}{\gamma} \|f_{\theta_{t-1}}-f_{\theta_{\text{PS}}}\| \notag \\ &
         \le (\frac{\sqrt{C\epsilon}M}{\gamma})^t \|f_{\theta_0}-f_{\theta_{\text{PS}}}\|
\end{align}
We can easily see that for $t\ge (1-\frac{\sqrt{C\epsilon}M}{\gamma})^{-1} \log (\frac{\|f_{\theta_0} - f_{\theta_{\text{PS}}}\|}{\alpha})$, 
\begin{equation*}
    (\frac{\sqrt{C\epsilon}M}{\gamma})^t \|f_{\theta_0}-f_{\theta_{\text{PS}}}\| \le \alpha
\end{equation*}
So based on (\ref{eq:a21}), 
\begin{equation*}
\|f_{\theta_t} - f_{\theta_{\text{PS}}}\| \le \alpha.
\end{equation*}
which shows that RRM converges to a stable classifier at a linear rate.

\subsubsection{part (b)}\label{sec:proof_partb}
Since $f_{G(\theta)}$ minimizes \(h\), $h(f_{G(\theta)}) - h(f_{G(\theta')}) \le 0$, so based on inequality (\ref{eq:a4}) we have:
\begin{equation} \label{eq:b1}
-\frac{\gamma}{2} \|f_{G(\theta)} - f_{G(\theta')}\|_{f_{\theta}}^2 \ge \int \left(f_{G(\theta)}(x) - f_{G(\theta')}(x)\right)^\top \nabla_{\hat{y}}\ell(f_{G(\theta')}(x),y) p_{f_{\theta}}(z) dz.
\end{equation}
We will use this inequality later in the proof.

Now recall that there exists \(M\) such that \(M = \sup_{x,y,\theta} \|\nabla_{\hat{y}}\ell(f_{\theta}(x),y)\|\) and the distribution map over data is \(\epsilon\)-sensitive w.r.t Pearson $\chi^2$ divergence, i.e.
\begin{equation} \label{eq:b13}
\chi^2(\mathcal{D}(f_{\theta'}), \mathcal{D}(f_{\theta})) \le \epsilon \|f_{\theta}-f_{\theta'}\|^2.
\end{equation}
With this in mind, we do the following calculations:
\begin{align*}
&\left| \int \left(f_{G(\theta)}(x) - f_{G(\theta')}(x)\right)^\top \nabla_{\hat{y}}\ell(f_{G(\theta')}(x),y) p_{f_{\theta}}(z) dz -  \int \left(f_{G(\theta)}(x) - f_{G(\theta')}(x)\right)^\top\nabla_{\hat{y}}\ell(f_{G(\theta')}(x),y) p_{f_{\theta'}}(z) dz\right|\\
& = \left| \int \left(f_{G(\theta)}(x) - f_{G(\theta')}(x)\right)^\top\nabla_{\hat{y}}\ell(f_{G(\theta')}(x),y) \left(p_{f_{\theta}}(z)-p_{f_{\theta'}}(z)\right) dz \right|\\
& \overset{(*)}{\le} \int  \left| \left(f_{G(\theta)}(x) - f_{G(\theta')}(x)\right)^\top \nabla_{\hat{y}}\ell(f_{G(\theta')}(x),y) \left(p_{f_{\theta}}(z)-p_{f_{\theta'}}(z)\right)\right| dz \\
& \le M \int  \left| \|f_{G(\theta)}(x) - f_{G(\theta')}(x)\| \left(p_{f_{\theta}}(z)-p_{f_{\theta'}}(z)\right)\right| dz \\
& = M \int\left| \|f_{G(\theta)}(x) - f_{G(\theta')}(x)\| \frac{p_{f_{\theta}}(z)-p_{f_{\theta'}}(z)}{p_{f_{\theta}}(z)} p_{f_{\theta}}(z) \right| dz\\
& = M \left| \int \big| \|f_{G(\theta)}(x) - f_{G(\theta')}(x)\| \frac{p_{f_{\theta}}(z)-p_{f_{\theta'}}(z)}{p_{f_{\theta}}(z)} \big| p_{f_{\theta}}(z)dz \right|\\
& \overset{\text{Cauchy-Schwarz Ineq.}}{\le} M \left(\int \|f_{G(\theta)}(x) - f_{G(\theta')}(x)\|^2 p_{f_{\theta}}(z) dz\right)^{\frac{1}{2}} \left(\int \left(\frac{p_{f_{\theta}}(z)-p_{f_{\theta'}}(z)}{p_{f_{\theta}}(z)}\right)^2 p_{f_{\theta}}(z) dz\right)^{\frac{1}{2}} \\
& = M \|f_{G(\theta)} - f_{G(\theta')}\|_{f_{\theta}} \sqrt{\chi^2(\mathcal{D}(f_{\theta'}), \mathcal{D}(f_{\theta}))} \\
& \overset{(\ref{eq:b13})}{\le} M\sqrt{\epsilon}  \|f_{G(\theta)} - f_{G(\theta')}\|_{f_{\theta}} \|f_{\theta}-f_{\theta'}\|
\end{align*}
\((*)\) comes from the fact that $\left|\int f(x)dx\right|\le \int |f(x)|dx$, and the Cauchy-Schwarz inequality states that \mbox{$|\mathbb{E} [XY]|\le \sqrt{\mathbb{E}[X^2]\mathbb{E}[Y^2]}$}.\\
We conclude from the above derivations that:
\begin{align} 
& \left| \int \left(f_{G(\theta)}(x) - f_{G(\theta')}(x)\right)^\top \nabla_{\hat{y}}\ell(f_{G(\theta')}(x),y) p_{f_{\theta}}(z) dz -  \int \left(f_{G(\theta)}(x) - f_{G(\theta')}(x)\right)^\top\nabla_{\hat{y}}\ell(f_{G(\theta')}(x),y) p_{f_{\theta'}}(z) dz\right| \notag \\ & \le M\sqrt{\epsilon}  \|f_{G(\theta)} - f_{G(\theta')}\|_{f_{\theta}} \|f_{\theta}-f_{\theta'}\| \notag.
\end{align}
Therefore:
\begin{align}\label{eq:b14}
    & \int \left(f_{G(\theta)}(x) - f_{G(\theta')}(x)\right)^\top \nabla_{\hat{y}}\ell(f_{G(\theta')}(x),y) p_{f_{\theta}}(z) dz \ge 
    \int \left(f_{G(\theta)}(x) - f_{G(\theta')}(x)\right)^\top\nabla_{\hat{y}}\ell(f_{G(\theta')}(x),y) p_{f_{\theta'}}(z) dz \notag\\
    & - M\sqrt{\epsilon}  \|f_{G(\theta)} - f_{G(\theta')}\|_{f_{\theta}} \|f_{\theta}-f_{\theta'}\|.
\end{align}

We lower bound the integral on the RHS in Proposition~\ref{prop:delta_approx}, but before stating the proposition, we need two lemmas, which are stated and proved below.
\begin{lemma} \label{lemma:f*def}
    Let $f^*_{\theta'} = \arg \min\limits_{f\in \mathcal{M}} \mathbb{E}_{z\sim p_{f_{\theta'}}} [\ell(f(x),y)]$, where $\mathcal{M}$ is the set of all measurable functions from $\mathcal{X}$ to $\mathcal{Y}$. For the squared-error loss function, $f^*_{\theta'}(x) = \int_y y p_{f_{\theta'}}(y|x)dy = \mathbb{E}_{p_{f_{\theta'}}}[Y|X=x]$.
\end{lemma}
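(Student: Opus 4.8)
The plan is to recognize this as the classical fact that, among all measurable functions, the conditional mean minimizes the expected squared error, and to prove it via the standard bias--variance (add-and-subtract) decomposition. Write $m(x) := \mathbb{E}_{p_{f_{\theta'}}}[Y \mid X = x] = \int y\, p_{f_{\theta'}}(y|x)\,dy$, which is by construction a measurable function of $x$ and hence lies in $\mathcal{M}$. First I would reduce the claim to showing that $m$ attains the infimum of the objective $J(f) := \mathbb{E}_{z\sim p_{f_{\theta'}}}[\tfrac{1}{2}\|f(x) - y\|^2]$ over $f \in \mathcal{M}$, so that the asserted identity $f^*_{\theta'} = m$ follows.

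The key step is the decomposition. For any $f \in \mathcal{M}$, I would insert $m(X)$ inside the norm and expand:
$$\|f(X) - Y\|^2 = \|f(X) - m(X)\|^2 + 2\,(f(X) - m(X))^\top (m(X) - Y) + \|m(X) - Y\|^2.$$
Taking the expectation over $p_{f_{\theta'}}$ and conditioning on $X$ in the cross term, I would use $\mathbb{E}_{p_{f_{\theta'}}}[\,Y \mid X\,] = m(X)$ to get $\mathbb{E}[(f(X)-m(X))^\top (m(X) - Y) \mid X] = (f(X) - m(X))^\top (m(X) - \mathbb{E}[Y\mid X]) = 0$, so the cross term vanishes by the tower property. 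This yields
$$2\,J(f) = \mathbb{E}_{p_{f_{\theta'}}}[\|f(X) - m(X)\|^2] + \mathbb{E}_{p_{f_{\theta'}}}[\|m(X) - Y\|^2].$$

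Since the second term does not depend on $f$ and the first is nonnegative and equals zero precisely when $f(x) = m(x)$ for $p_{f_{\theta'}}(x)$-almost every $x$, I would conclude that $f^*_{\theta'} = m$ minimizes $J$ and is unique up to a $p_{f_{\theta'}}(x)$-null set. The only technical care needed is the integrability justifying the splitting of the expectation into the three terms and the use of the tower property; under the paper's standing assumptions that predictions lie in a bounded set and the labels are bounded (one-hot in the relevant setting), all the moments involved are finite, so these manipulations are routine. I do not expect a genuine obstacle here---the measurability of $m$ (so that it is a legitimate competitor in $\mathcal{M}$) and the vanishing of the cross term are the only points worth stating with care.
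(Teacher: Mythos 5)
Your proof is correct, but it takes a genuinely different route from the paper's. The paper writes the objective as an iterated integral $\int_x \bigl(\int_y \ell(\hat y, y)\, p_{f_{\theta'}}(y|x)\, dy\bigr) p_{f_{\theta'}}(x)\, dx$ and minimizes \emph{pointwise in $x$}: for each fixed $x$ it sets the gradient of the inner strictly convex quadratic in $\hat y$ to zero, obtaining $\hat y = \int_y y\, p_{f_{\theta'}}(y|x)\,dy$. You instead use the global add-and-subtract decomposition around $m(X) = \mathbb{E}_{p_{f_{\theta'}}}[Y\mid X]$, kill the cross term by the tower property, and read off $2J(f) = \mathbb{E}\|f(X)-m(X)\|^2 + 2J(m)$. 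The two arguments buy slightly different things: the paper's pointwise reduction is shorter and works for any loss whose inner integral has an explicit pointwise minimizer, but it implicitly relies on the fact that the pointwise minimizer assembles into a measurable function that is admissible in $\mathcal{M}$ (trivial here, since it is explicitly $m$, but worth noting); your decomposition gives global optimality over $\mathcal{M}$ directly, yields uniqueness up to a $p_{f_{\theta'}}(x)$-null set as a free byproduct, and makes the integrability bookkeeping explicit. Both are complete proofs of the stated lemma.
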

\begin{proof}[proof of Lemma \ref{lemma:f*def}]
We know that $f^*_{\theta'} = \argmin\limits_{f\in \mathcal{M}} \int \ell(f(x),y) p_{f_{\theta'}}(z) dz.$
\begin{equation}\label{eq:lemma1_1}
    f^*_{\theta'} = \argmin\limits_{f\in \mathcal{M}} \int_x \int_y \ell(f(x),y) p_{f_{\theta'}}(y|x) dy p_{f_{\theta'}}(x) dx
\end{equation}
To minimize this, it's enough to find $f^*_{\theta'}$ such that $f^*_{\theta'}(x) = \argmin_{\hat{y}} \int_y \ell(\hat{y},y) p_{f_{\theta'}}(y|x) dy$. For $\ell(\hat{y},y) = \frac{1}{2} \|\hat{y}-y\|^2$, this is minimized at $\hat{y} = \int_y y p_{f_{\theta'}}(y|x)dy$ by taking the gradient and setting it to zero. 
\end{proof}
\begin{lemma}\label{lemma:argmin}
    $f_{G(\theta')} = \argmin\limits_{f\in \mathcal{F}} \|f - f^*_{\theta'}\|_{f_{\theta'}}$.
\end{lemma}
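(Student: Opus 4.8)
The plan is to use the standard bias--variance (orthogonality) decomposition of the squared-error risk, which turns the minimization defining $f_{G(\theta')}$ into a weighted $L^2$-projection onto $\mathcal{F}$. Recall from Definition~\ref{def3} that $f_{G(\theta')} = \argmin_{f\in\mathcal{F}} h'(f)$, and that in part~(b) the loss is the squared error, so that $h'(f) = \int \frac{1}{2}\|f(x)-y\|^2 p_{f_{\theta'}}(z)\,dz$. My goal is to show that, as a function of $f\in\mathcal{F}$, this objective agrees with $\tfrac{1}{2}\|f-f^*_{\theta'}\|_{f_{\theta'}}^2$ up to an additive constant that does not depend on $f$; the lemma then follows immediately, since the $\argmin$ is unaffected by additive constants, and minimizing the norm is equivalent to minimizing its (nonnegative, monotone) square.

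First I would add and subtract $f^*_{\theta'}(x)$ inside the squared norm, writing $\frac{1}{2}\|f(x)-y\|^2 = \frac{1}{2}\|f(x)-f^*_{\theta'}(x)\|^2 + (f(x)-f^*_{\theta'}(x))^\top(f^*_{\theta'}(x)-y) + \frac{1}{2}\|f^*_{\theta'}(x)-y\|^2$, and then integrate each term against $p_{f_{\theta'}}(z)$. The first term integrates to $\tfrac{1}{2}\|f-f^*_{\theta'}\|_{f_{\theta'}}^2$ (using that $p_{f_{\theta'}}(z)$ marginalizes to $p_{f_{\theta'}}(x)$, exactly as in the parenthetical remark following \eqref{eq:a3}), and the last term integrates to a constant $\tfrac{1}{2}\mathbb{E}_{p_{f_{\theta'}}}\|Y-f^*_{\theta'}(X)\|^2$ that is independent of $f$.

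Second, I would show the middle cross term vanishes. Conditioning on $X=x$ and pulling out the $x$-measurable factor $f(x)-f^*_{\theta'}(x)$ gives $\mathbb{E}_{Y\mid X=x}\big[(f(x)-f^*_{\theta'}(x))^\top(f^*_{\theta'}(x)-Y)\big] = (f(x)-f^*_{\theta'}(x))^\top\big(f^*_{\theta'}(x)-\mathbb{E}_{p_{f_{\theta'}}}[Y\mid X=x]\big)$. By Lemma~\ref{lemma:f*def}, $f^*_{\theta'}(x)=\mathbb{E}_{p_{f_{\theta'}}}[Y\mid X=x]$, so this inner conditional expectation is identically zero; integrating over $x$ (the tower property) kills the cross term entirely. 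Combining the three pieces yields $h'(f) = \tfrac{1}{2}\|f-f^*_{\theta'}\|_{f_{\theta'}}^2 + \text{const}$, and hence $f_{G(\theta')} = \argmin_{f\in\mathcal{F}} \|f-f^*_{\theta'}\|_{f_{\theta'}}$.

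The computation is essentially routine, so there is no serious obstacle; the only points requiring care are (i) justifying the vanishing of the cross term via conditioning and Lemma~\ref{lemma:f*def}, and (ii) checking that the residual constant $\tfrac{1}{2}\mathbb{E}_{p_{f_{\theta'}}}\|Y-f^*_{\theta'}(X)\|^2$ is finite so that the additive splitting is legitimate. I would also note explicitly that $f^*_{\theta'}$ serves only as a (possibly non-$\mathcal{F}$) reference point in the projection identity, so the argument does not require $f^*_{\theta'}\in\mathcal{F}$.
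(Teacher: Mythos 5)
Your proposal is correct and is essentially the paper's own argument: both reduce $h'(f)$ to $\tfrac{1}{2}\|f-f^*_{\theta'}\|_{f_{\theta'}}^2$ plus an $f$-independent constant by invoking $f^*_{\theta'}(x)=\mathbb{E}_{p_{f_{\theta'}}}[Y\mid X=x]$ from Lemma~\ref{lemma:f*def}; the paper expands $\|f(x)-y\|^2$ and substitutes the conditional mean into the cross term, whereas you complete the square around $f^*_{\theta'}$ and kill the cross term by the tower property, which is the same computation in a different order. Your added remarks on finiteness of the residual constant and on $f^*_{\theta'}$ not needing to lie in $\mathcal{F}$ are sound and consistent with how the lemma is used in part~(b).
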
 
\begin{proof}[proof of Lemma \ref{lemma:argmin}]
\begin{align}
    f_{G(\theta')} & = \argmin\limits_{f\in \mathcal{F}} \int_x \int_y \|f(x) - y\|^2 p_{f_{\theta'}}(x) p_{f_{\theta'}}(y|x) dy dx \notag\\
    & = \argmin\limits_{f\in \mathcal{F}} \int_x \int_y \frac{1}{2} [\|f(x)\|^2 - 2 f(x)^\top y + \|y\|^2] p_{f_{\theta'}}(y|x) dy p_{f_{\theta'}}(x) dx \notag\\
    & = \argmin\limits_{f\in \mathcal{F}} \int_x \int_y \frac{1}{2} \|f(x)\|^2 p_{f_{\theta'}}(y|x) dy p_{f_{\theta'}}(x) dx - \int_x \int_y f(x)^\top f^*_{\theta'}(x) p_{f_{\theta'}}(y|x) dy p_{f_{\theta'}}(x) dx \notag\\
    & \quad \quad \quad \quad + \int_x \int_y \frac{1}{2} [\|y\|^2+\|f^*_{\theta'}(x)\|^2-\|f^*_{\theta'}(x)\|^2] p_{f_{\theta'}}(y|x) dy p_{f_{\theta'}}(x) dx \notag\\
    & = \argmin\limits_{f\in \mathcal{F}} \int_x \int_y \frac{1}{2} [\|f(x)\|^2 - 2 f(x)^\top f^*_{\theta'}(x) + \|f^*_{\theta'}(x)\|^2] p_{f_{\theta'}}(y|x) dy p_{f_{\theta'}}(x) dx + \Tilde{C} \notag\\
    & = \argmin\limits_{f\in \mathcal{F}} \frac{1}{2} \| f - f^*_{\theta'}\|_{f_{\theta'}}^2 + \Tilde{C},
\end{align}
where $\Tilde{C}$ does not depend of $f$. Note that in the third line of the proof, we used the definition of $f^*_{\theta'}(x)$ as derived in Lemma~\ref{lemma:f*def}. As a result, $f_{G(\theta')} = \argmin\limits_{f\in \mathcal{F}} \|f - f^*_{\theta'}\|_{f_{\theta'}}$ and this completes the proof.
\end{proof}
\begin{proposition}\label{prop:delta_approx} With assumptions of the theorem, $\int \left(f_{G(\theta)}(x) - f_{G(\theta')}(x)\right)^\top\nabla_{\hat{y}}\ell(f_{G(\theta')}(x),y) p_{f_{\theta'}}(z) dz \ge -\frac{\kappa \sqrt{C}}{c} \|f_{G(\theta)} - f_{G(\theta')}\|_{f_{\theta}}$, where $\ell(f(x),y)$ is the squared error loss, i.e. $\ell(f(x),y)=\frac{1}{2}\|f(x)-y\|^2$.
\end{proposition}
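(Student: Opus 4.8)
The plan is to exploit the special structure of the squared-error loss, whose gradient with respect to the prediction is $\nabla_{\hat{y}}\ell(f_{G(\theta')}(x),y) = f_{G(\theta')}(x) - y$. Writing $I$ for the integral in the statement and substituting this gradient, I would integrate over $y$ first, conditioning on $x$ via $p_{f_{\theta'}}(z) = p_{f_{\theta'}}(x)\,p_{f_{\theta'}}(y|x)$. Since both $f_{G(\theta)}(x) - f_{G(\theta')}(x)$ and $f_{G(\theta')}(x)$ are constant in $y$, the inner integral collapses the $y$-dependence to $f_{G(\theta')}(x) - \int y\, p_{f_{\theta'}}(y|x)\,dy = f_{G(\theta')}(x) - f^*_{\theta'}(x)$ by Lemma~\ref{lemma:f*def}. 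This rewrites $I = \int \bigl(f_{G(\theta)}(x) - f_{G(\theta')}(x)\bigr)^\top \bigl(f_{G(\theta')}(x) - f^*_{\theta'}(x)\bigr)\, p_{f_{\theta'}}(x)\,dx$, and it then suffices to lower-bound $I$ by $-|I|$.

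Next I would apply the Cauchy--Schwarz inequality in the $f_{\theta'}$-weighted inner product to obtain $|I| \le \|f_{G(\theta)} - f_{G(\theta')}\|_{f_{\theta'}}\,\|f_{G(\theta')} - f^*_{\theta'}\|_{f_{\theta'}}$. The second factor is where the part-(b) approximation hypothesis enters: by Lemma~\ref{lemma:argmin}, $f_{G(\theta')}$ is exactly the minimizer over $\mathcal{F}$ of $\|f - f^*_{\theta'}\|_{f_{\theta'}}$, an identity that holds regardless of whether $\mathcal{F}$ is convex, so its distance to $f^*_{\theta'}$ is no larger than that of the $\kappa$-approximant $\tilde f\in\mathcal{F}$ guaranteed by the hypothesis $\|\tilde f - f^*_{\theta'}\| \le \kappa$. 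Converting the fixed norm to the $f_{\theta'}$-norm via \eqref{A2prime} then gives $\|f_{G(\theta')} - f^*_{\theta'}\|_{f_{\theta'}} \le \|\tilde f - f^*_{\theta'}\|_{f_{\theta'}} \le \kappa/\sqrt{c}$.

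The remaining step reconciles the two weighted norms appearing in the Cauchy--Schwarz bound. I would bound the first factor by passing from the $f_{\theta'}$-norm to the fixed norm through \eqref{A2prime} and then from the fixed norm to the $f_{\theta}$-norm through \eqref{A2}, which yields $\|f_{G(\theta)} - f_{G(\theta')}\|_{f_{\theta'}} \le \sqrt{C/c}\,\|f_{G(\theta)} - f_{G(\theta')}\|_{f_{\theta}}$. Multiplying this against the $\kappa/\sqrt{c}$ bound from the previous paragraph gives $|I| \le \tfrac{\kappa\sqrt{C}}{c}\,\|f_{G(\theta)} - f_{G(\theta')}\|_{f_{\theta}}$, and therefore $I \ge -\tfrac{\kappa\sqrt{C}}{c}\,\|f_{G(\theta)} - f_{G(\theta')}\|_{f_{\theta}}$, which is the claim.

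The main obstacle is less a conceptual difficulty than a bookkeeping hazard: one must carefully track the three norms $\|\cdot\|$, $\|\cdot\|_{f_{\theta}}$, and $\|\cdot\|_{f_{\theta'}}$ and invoke \eqref{A2} and \eqref{A2prime} in the correct directions so that the constants assemble into precisely $\kappa\sqrt{C}/c$. One should also check that the norm-ratio inequalities may be applied to the difference $\tilde f - f^*_{\theta'}$ even though $f^*_{\theta'}$ need not lie in $\mathcal{F}$; this is licensed by the bounded density-ratio formulation $c\,p_{f_{\theta'}}(x)\le p(x)\le C\,p_{f_{\theta'}}(x)$ that justifies Assumption~\ref{Assumption2}, since that bound makes the norm-ratio estimates hold for arbitrary measurable integrands.
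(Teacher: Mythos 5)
Your proposal is correct and follows essentially the same route as the paper's proof: center the gradient at $f^*_{\theta'}$ (your explicit integration over $y$ is, for squared error, identical to the paper's subtraction of the zero-mean term $\nabla_{\hat y}\ell(f^*_{\theta'}(x),y)$ followed by $1$-smoothness), apply Cauchy--Schwarz in the $f_{\theta'}$-weighted norm, bound $\|f_{G(\theta')}-f^*_{\theta'}\|_{f_{\theta'}}\le \kappa/\sqrt{c}$ via Lemma~\ref{lemma:argmin} and \eqref{A2prime}, and convert norms through \eqref{A2prime} and \eqref{A2} to assemble the constant $\kappa\sqrt{C}/c$. Your closing remark that the norm-ratio inequalities apply to $\tilde f - f^*_{\theta'}$ because the density-ratio bound holds for arbitrary measurable integrands is a point the paper leaves implicit, and it is correctly justified.
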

\begin{proof}[proof of Proposition]
For $f^*_{\theta'} = \arg \min\limits_{f\in \mathcal{M}} \mathbb{E}_{z\sim p_{f_{\theta'}}} [\ell(f(x),y)]$, we can derive from Lemma~\ref{lemma:f*def} that $\int_y \nabla_{\hat{y}}\ell(f^*_{\theta'}(x),y) p_{f_{\theta'}}(y|x)dy = \boldsymbol{0}$. Considering this, we can write the following:
\begin{align} \label{eq:b15}
   & \Big| \int \left(f_{G(\theta)}(x) - f_{G(\theta')}(x)\right)^\top\nabla_{\hat{y}}\ell(f_{G(\theta')}(x),y) p_{f_{\theta'}}(z) dz \Big| \notag \\
   & = \Big| \int \left(f_{G(\theta)}(x) - f_{G(\theta')}(x)\right)^\top(\nabla_{\hat{y}}\ell(f_{G(\theta')}(x),y)-\nabla_{\hat{y}}\ell(f^*_{\theta'}(x),y)) p_{f_{\theta'}}(z) dz \Big|  \notag \\
   & \le \int \|f_{G(\theta)}(x) - f_{G(\theta')}(x)\|\|\nabla_{\hat{y}}\ell(f_{G(\theta')}(x),y)-\nabla_{\hat{y}}\ell(f^*_{\theta'}(x),y)\| p_{f_{\theta'}}(z) dz \notag \\
   & \overset{1\text{-smoothness of $\ell$}}{\le}\int \|f_{G(\theta)}(x) - f_{G(\theta')}(x)\|\|f_{G(\theta')}(x) - f^*_{\theta'}(x)\| p_{f_{\theta'}}(z) dz \notag \\
   & \le \|f_{G(\theta)} - f_{G(\theta')}\|_{f_{\theta'}} \|f_{G(\theta')} - f^*_{\theta'}\|_{f_{\theta'}} \notag \\
   & \le \frac{\kappa}{\sqrt{c}} \|f_{G(\theta)} - f_{G(\theta')}\|_{f_{\theta'}}, 
\end{align}
where in the last line we used $\|f_{G(\theta')} - f^*_{\theta'}\|_{f_{\theta'}} \le \frac{\kappa}{\sqrt{c}}$.
Based on the assumption made in Theorem \ref{thm}, $\exists f \in \mathcal{F}$ such that $\|f-f^*_{\theta'}\|\le \kappa$, therefore \mbox{$\|f-f^*_{\theta'}\|_{f_{\theta'}} \le \frac{\kappa}{\sqrt{c}}$ \eqref{A2prime}}. Using this and Lemma \ref{lemma:argmin} together, we conclude that $\|f_{G(\theta')} - f^*_{\theta'}\|_{f_{\theta'}} \le \frac{\kappa}{\sqrt{c}}$.\\
Following inequality \ref{eq:b15} and \eqref{A2}, we can complete the proof of the proposition and write:
$\int \left(f_{G(\theta)}(x) - f_{G(\theta')}(x)\right)^\top\nabla_{\hat{y}}\ell(f_{G(\theta')}(x),y) p_{f_{\theta'}}(z) dz \ge -\frac{ \kappa \sqrt{C}}{c} \|f_{G(\theta)} - f_{G(\theta')}\|_{f_{\theta}}$.
\end{proof}
Before proceeding with the rest of the proof, we will examine the assumption stated in the theorem saying that $\mathcal{F}$ can approximate any target function $f^*_{\theta'} = \mathbb{E}{p_{f_{\theta'}}}[y|X=x]$ with an error of at most $\kappa$. As shown in Remark \ref{remark:sobolev}, if $f^*_{\theta'}$ is Lipschitz continuous, this assumption holds when $\mathcal{F}$ includes a sufficiently large class of neural networks.
\begin{remark}\label{remark:sobolev}
    Consider the Sobolev spaces, $\mathcal{W}^{n,\infty}([0,1]^d)$ with $n=1,2,...$ that consist of the functions from $C^{n-1}([0,1]^d)$ such that all their derivatives of order $n-1$ are Lipschitz continuous and let $F_{n,d}$ be the unit ball in this space. This notation is taken from \citet{yarotsky2016error}, Sec 3.2. Based on Theorem 1 in \citet{yarotsky2016error}, for any $d,n$ and $\epsilon \in (0,1)$ there is a ReLU network architecture that can approximate any function from $F_{n,d}$ with error $\epsilon$.
    Let $F_{A}$ be the approximator function class that is capable of expressing any L-Lipschitz function with error $\kappa$ where L is the Lipschitz constant of $f^*_{\theta'}$ as defined above. The initial assumption in part (b) of Theorem \ref{thm} holds when $\mathcal{F}$ contains $F_A$.
 \end{remark}
Continuing with the proof, we apply Proposition \ref{prop:delta_approx} to inequality \ref{eq:b14}, which combined with inequality \ref{eq:b1} results in the following:
\begin{equation}
    \frac{\gamma}{2}\|f_{G(\theta)} - f_{G(\theta')}\|_{f_{\theta}} \le M\sqrt{\epsilon} \|f_{\theta} - f_{\theta'}\| + \frac{\kappa\sqrt{C}}{c}.
\end{equation}
Since the distribution map satisfies the bounded norm ratio assumption with parameter $C$, we can write:
\begin{equation}\label{eq:contraction}
    \|f_{G(\theta)} - f_{G(\theta')}\| \le \frac{2\sqrt{C \epsilon}M}{\gamma} \|f_{\theta} - f_{\theta'}\| + \frac{2\kappa C}{\gamma c}.
\end{equation}

Now we want to prove the convergence rate to the stable classifier $f^* = \argmin\limits_{f\in \mathcal{M}} \mathbb{E}_{z\sim \mathcal{D}(f^*)} [\ell(f(x),y)],$ where $\mathcal{M}$ is the set of all measurable functions. $f^*$ might not be in our function space $\mathcal{F}$. Since $\mathcal{M}$ is convex, we get from part (a) of the theorem that $f^*$ exists and is unique.

Let $f_{\theta^*}$ be defined as the closest function in $\mathcal{F}$ to $f^*$, i.e., $f_{\theta^*}=\argmin\limits_{f\in\mathcal{F}} \|f-f^*\|.$ Based on Lemma~\ref{lemma:argmin} and the assumption made in Theorem \ref{thm} ($\exists f \in \mathcal{F}$ such that $\|f-f^*_{\theta'}\|\le \kappa$), we will get the following:
\begin{equation}\label{eq:36}
    \|f_{G(\theta^*)} - f_{\theta^*}^*\|_{f_{\theta^*}} \le \frac{\kappa}{\sqrt{c}}.
\end{equation}
Using traingular inequality, we can write 
\begin{equation}\label{eq:convergence_trieq}
    \|f_{\theta_{t+1}} - f^*\| \le \|f_{\theta_{t+1}} - f_{G(\theta^*)}\| + \|f_{G(\theta^*)} - f^*\|.
\end{equation}
Let's process each term in the RHS of this inequality separately:
\begin{itemize}
    \item $\|f_{\theta_{t+1}} - f_{G(\theta^*)}\|$: From inequality~\ref{eq:contraction}, we can write
    \begin{align}
    \|f_{\theta_{t+1}} - f_{G(\theta^*)}\| & \le \frac{2\sqrt{C \epsilon}M}{\gamma} \|f_{\theta_t} - f_{\theta^*}\| + \frac{2\kappa C}{\gamma c} \notag \\
    & \le \frac{2\sqrt{C \epsilon}M}{\gamma} \|f_{\theta_t} - f^*\| + \frac{2\sqrt{C \epsilon}M}{\gamma} \|f^* - f_{\theta^*}\| + \frac{2\kappa C}{\gamma c},
    \end{align}
    where in the second line we again used triangular inequality. Since we assume in the theorem that we can approximate $f^*$ with an error of at most $\kappa$ and knowing that $f_{\theta^*}=\argmin\limits_{f\in\mathcal{F}} \|f-f^*\|$, we can write the following:
    \begin{equation}\label{eq:convergenceRHS_part1}
        \|f_{\theta_{t+1}} - f_{G(\theta^*)}\| \le  \frac{2\sqrt{C \epsilon}M}{\gamma} \|f_{\theta_t} - f^*\| + \frac{2\sqrt{C \epsilon}M\kappa}{\gamma} + \frac{2\kappa C}{\gamma c}.
    \end{equation}
    \item $\|f_{G(\theta^*)} - f^*\|$: Using the triangular inequality once again:
    \begin{equation}\label{eq:40}
        \|f_{G(\theta^*)} - f^*\| \le \|f_{G(\theta^*)} - f_{\theta^*}^*\| + \|f_{\theta^*}^* - f^*\|
    \end{equation}
    Given~\ref{eq:36} and Assumption \eqref{A2}, $ \|f_{G(\theta^*)} - f_{\theta^*}^*\| \le \sqrt{\frac{C}{c}}\kappa$.

    $f_{\theta^*}^* = \argmin\limits_{f\in\mathcal{M}}\mathbb{E}_{z\sim \mathcal{D}(f_{\theta^*})}[\ell(f(x),y)]=f_{G_{\mathcal{M}}(\theta^*)}$, where $G_\mathcal{M}$ here is the function $G$ defined in Definition \ref{def3} when the function space is $\mathcal{M}$ (the notation $G_\mathcal{M}$ is only used here to distinguish this function from the function $G$ used throughout the proof). Since $\mathcal{M}$ is a convex space, we can apply the convergence results from part (a), recalling that $f^*$ is the stable solution and write:
    \begin{equation}
        \|f_{\theta^*}^* - f^*\| \le \frac{\sqrt{C\epsilon}M}{\gamma} \|f_{\theta^*} - f^*\| \le \frac{\sqrt{C\epsilon}M\kappa}{\gamma}
    \end{equation}
    Using these in inequality~\ref{eq:40} we can conclude:
    \begin{equation}\label{eq:convergenceRHS_part2}
         \|f_{G(\theta^*)} - f^*\| \le \big(\sqrt{\frac{C}{c}} + \frac{\sqrt{C\epsilon}M}{\gamma}\big)\kappa.
    \end{equation}
\end{itemize}
Combining \ref{eq:convergenceRHS_part1} and \ref{eq:convergenceRHS_part2} in~\ref{eq:convergence_trieq}, we get the following:
\begin{equation}
    \|f_{\theta_{t+1}} - f^*\| \le \frac{2\sqrt{C \epsilon}M}{\gamma} \|f_{\theta_t} - f^*\| + \big(\frac{3\sqrt{C \epsilon}M}{\gamma} + \frac{2C}{\gamma c} + \sqrt{\frac{C}{c}}\big)\kappa.
\end{equation}
In conclusion, if $\frac{2\sqrt{C \epsilon}M}{\gamma}<1$, RRM converges to a neighborhood around the stable solution at a linear rate where the size of this neighborhood depends on $\kappa$ and diminishes as $\kappa$ decreases:
\begin{align*}
      \|f_{\theta_{t}} - f^*\| & \le \left(\frac{2\sqrt{C \epsilon}M}{\gamma}\right)^t \|f_{\theta_0} - f^*\| + O(\kappa).
\end{align*}
\end{proof}
\newpage
\subsection{Proof of Theorem~\ref{thm3}}\label{proof_thm_RIR}
\begin{proof}
    As explained in Section~\ref{RIR} of the paper, the pdf of $p_{f_{\theta}}(x)$ is as follows:
\begin{align} \label{eq:18}
       p_{f_{\theta}}(x) & = p(x) \Big(1-g(f_{\theta}(x))\Big) + p(x) \mathbb{E}_X[g(f_{\theta}(X))]
       \notag \\ & = p(x) (1-g(f_{\theta}(x)) + C_{\theta}).
\end{align}
where $C_{\theta} =  \mathbb{E}_X[g(f_{\theta}(X))] = \int p(x') g(f_{\theta}(x'))dx'$.

For $g(f_{\theta}(x)) = f_{\theta}(x) + \delta$, we have:
\begin{equation}
    p_{f_{\theta}}(x) = p(x) (1-f_{\theta}(x)-\delta + C_{\theta}).
\end{equation}
where $\delta \le C_{\theta}\le 1$ since $0\le f_{\theta}(x)\le 1-\delta$, so $\delta \le g(f_{\theta}(x))\le 1$ for every $x$.

In the RIR procedure, the distribution of the label $y$ given $x$ is not affected by the predictions so for every $z=(x,y)$ we have $p_{f_{\theta}}(z) = p_{f_{\theta}}(x) p(y|x)$ for any $f_{\theta}$. This results in the following equality:
\begin{equation*}
    \chi^2(\mathcal{D}(f_{\theta'}), \mathcal{D}(f_{\theta})) = \int \frac{(p_{f_{\theta'}}(z) - p_{f_{\theta}}(z))^2}{p_{f_{\theta}}(z)} dz = \int \frac{(p_{f_{\theta'}}(x) - p_{f_{\theta}}(x))^2}{p_{f_{\theta}}(x)} dx
\end{equation*}

Now we can prove that this distribution map is $\epsilon$-sensitive w.r.t $\chi^2$ divergence for $\epsilon = \frac{1}{\delta}$:
\begin{align} \label{24}
    \chi^2(\mathcal{D}(f_{\theta'}), \mathcal{D}(f_{\theta})) & = \int \frac{(p_{f_{\theta'}}(x) - p_{f_{\theta}}(x))^2}{p_{f_{\theta}}(x)} dx \notag \\ & = \int \frac{p(x)^2 ( f_{\theta}(x) - f_{\theta'}(x) - (C_{\theta} - C_{\theta'}))^2}{p(x) (1-f_{\theta}(x) -\delta + C_{\theta})} dx \notag \\
    & \overset{C_{\theta}\ge \delta}{\le} \frac{1}{\delta} \int p(x) \bigg[(f_{\theta}(x) - f_{\theta'}(x))^2 + (C_{\theta} - C_{\theta'})^2 - 2(f_{\theta}(x) - f_{\theta'}(x)) (C_{\theta} - C_{\theta'})\bigg] dx
   \notag  \\ & = \frac{1}{\delta} \bigg[\left(\int p(x) (f_{\theta}(x) - f_{\theta'}(x))^2 dx\right) + (C_{\theta} - C_{\theta'})^2 - 2(C_{\theta} - C_{\theta'})\int p(x)(f_{\theta}(x) - f_{\theta'}(x))dx\bigg]
   \notag  \\ & \overset{(*')}{=} \frac{1}{\delta} \bigg[\left(\int p(x) (f_{\theta}(x) - f_{\theta'}(x))^2 dx\right) + (C_{\theta} - C_{\theta'})^2 - 2(C_{\theta} - C_{\theta'})^2\bigg]
   \notag  \\ & = \frac{1}{\delta} \bigg[\int p(x) (f_{\theta}(x) - f_{\theta'}(x))^2 dx  - (C_{\theta} - C_{\theta'})^2\bigg]
    \notag \\ & \le \frac{1}{\delta} \int p(x) (f_{\theta}(x) - f_{\theta'}(x))^2 dx
    \notag \\ & = \frac{1}{\delta} \|f_{\theta} - f_{\theta'}\|^2
\end{align}

where $(*')$ comes from the fact that $\int p(x)(f_{\theta}(x) - f_{\theta'}(x))dx = C_{\theta} - C_{\theta'}$.

Since $\delta \le C_{\theta} \le 1\ \forall \theta$, it is easy to see that for any $f_{\theta^*}$ and for any $x$:
\begin{equation}
  \frac{1}{2-\delta} \le \frac{p(x)}{p_{f_{\theta^*}}(x)} = \frac{1}{1-g(f_{\theta^*}(x))+C_{\theta^*}} \le \frac{1}{\delta}.
\end{equation}
Consequently, 
\[\frac{1}{2-\delta}\  \mathbb{E}_{p_{f_{\theta^*}}} [(f_{\theta}-f_{\theta'})^2]\le \mathbb{E}_{p} [(f_{\theta}-f_{\theta'})^2] \le \frac{1}{\delta}\  \mathbb{E}_{p_{f_{\theta^*}}} [(f_{\theta}-f_{\theta'})^2]. \]
So the distribution map satisfies the bounded ratio condition for $C = \frac{1}{\delta}$ and $c=\frac{1}{2-\delta}$.

\paragraph{The case where we only resample strategic features.} Suppose that features $x$ are divided into strategic features $x_s$ and non-strategic features $x_f$, i.e. $x = (x_s, x_f)$, and the strategic and non-strategic features are independent. Here, we only resample strategic features with probability $g(f_{\theta}(x))$ which is the probability of rejection. The pdf of $p_{f_{\theta}}$ would be as follows: 
\begin{equation} \label{eq:22}
  p_{f_{\theta}}(x) = p(x) (1-g(f_{\theta}(x))) + \int_{x'_s} p(x'_s, x_f)\  g(f_{\theta}(x'_s, x_f)) \ p_{X_s}(x_s) dx'_s
\end{equation}
Since we only resample strategic features, non-strategic features stay the same as $x_f$.

We can re-write~(\ref{eq:22}) as follows:
\begin{align}
      p_{f_{\theta}}(x) & = p(x) (1-g(f_{\theta}(x))) + \int_{x'_s} p(x'_s, x_f)\  g(f_{\theta}(x'_s, x_f)) \ p_{X_s}(x_s) dx'_s \notag \\
      & = p(x) (1-g(f_{\theta}(x))) + \int_{x'_s} g(f_{\theta}(x'_s, x_f))  p_{X_s}(x'_s) p_{X_f}(x_f) p_{X_s}(x_s)  dx'_s \notag \\
      & = p(x) (1-g(f_{\theta}(x))) + \int_{x'_s} g(f_{\theta}(x'_s, x_f))  p_{X_s}(x'_s) p(x) dx'_s \notag \\
      & = p(x) \Big(( 1 - g(f_{\theta}(x))) + \int_{x'_s} g(f_{\theta}(x'_s, x_f))  p_{X_s}(x'_s)  dx'_s \Big)
\end{align}
where $p_{X_s}$ and $p_{X_f}$ refer to the marginal distributions of strategic and non-strategic features respectively.

Taking $C_{\theta}(x_f) = \int_{x'_s} g(f_{\theta}(x'_s, x_f))  p_{X_s}(x'_s) dx'_s$, $p_{f_{\theta}}(x) = p(x) \Big( 1- g(f_{\theta}(x)) + C_{\theta}(x_f)\Big)$, and we can follow similar steps as \ref{24} to show the $\epsilon$-sensitivity w.r.t $\chi^2$.
\begin{align} \label{eq:strategicRIR}
    \chi^2(\mathcal{D}(f_{\theta'}), \mathcal{D}(f_{\theta})) & = \int \frac{(p_{f_{\theta'}}(x) - p_{f_{\theta}}(x))^2}{p_{f_{\theta}}(x)} dx \notag \\ & = \int \frac{p(x)^2 ( f_{\theta}(x) - f_{\theta'}(x) - (C_{\theta}(x_f) - C_{\theta'}(x_f)))^2}{p(x) (1-f_{\theta}(x) -\delta + C_{\theta}(x_f)} dx \notag \\
    & \overset{C_{\theta}(x_f)\ge \delta}{\le} \frac{1}{\delta} \int p(x) \bigg[(f_{\theta}(x) - f_{\theta'}(x))^2 + (C_{\theta}(x_f) - C_{\theta'}(x_f))^2 - 2(f_{\theta}(x) - f_{\theta'}(x)) (C_{\theta}(x_f) - C_{\theta'}(x_f))\bigg] dx\notag  \\ 
   & = \frac{1}{\delta} \bigg[\left(\int p(x) (f_{\theta}(x) - f_{\theta'}(x))^2 dx\right) \notag \\
   & \quad\quad + \int_{x_f} p_{X_f}(x_f)(C_{\theta}(x_f) - C_{\theta'}(x_f))^2 dx_f \notag \\
   & \quad\quad - 2\int_{x_f}p_{X_f}(x_f)(C_{\theta}(x_f) - C_{\theta'}(x_f))\int_{x_s} p_{X_s}(x_s)(f_{\theta}(x_s, x_f) - f_{\theta'}(x_s, x_f))dx_s dx_f\bigg]
   \notag \\ 
   & \overset{(*'')}{=} \frac{1}{\delta} \bigg[\left(\int p(x) (f_{\theta}(x) - f_{\theta'}(x))^2 dx\right) - \int_{x_f} p_{X_f}(x_f)(C_{\theta}(x_f) - C_{\theta'}(x_f))^2 dx_f\bigg] \notag  \\ 
    & \le \frac{1}{\delta} \int p(x) (f_{\theta}(x) - f_{\theta'}(x))^2 dx
    \notag \\ 
    & = \frac{1}{\delta} \|f_{\theta} - f_{\theta'}\|^2
\end{align}
where $(*'')$ comes from the fact that $C_{\theta}(x_f) - C_{\theta'}(x_f)=\int_{x_s} p_{X_s}(x_s)(f_{\theta}(x_s, x_f) - f_{\theta'}(x_s, x_f))dx_s$. This completes the proof that in the case of only resampling strategic features also, the distribution map is $\frac{1}{\delta}$-sensitive w.r.t $\chi^2$ divergence.
\end{proof}
}

\end{document}